\pdfoutput=1
\documentclass[11pt]{article}

\usepackage[letterpaper,margin=1in]{geometry}
\usepackage[T1]{fontenc}
\usepackage[utf8]{inputenc}
\usepackage{amsmath}
\usepackage{amsfonts}
\usepackage{amssymb}
\usepackage{amsthm}
\usepackage{float}
\newfloat{algorithm}{tbp}{loa}
\floatname{algorithm}{Algorithm}
\newenvironment{algorithmic}[1][1]{%
  \begin{list}{}{%
    \setlength{\leftmargin}{2.7em}%
    \setlength{\labelwidth}{0pt}%
    \setlength{\labelsep}{0.5em}%
    \setlength{\itemsep}{0.15ex}%
    \setlength{\parsep}{0pt}%
    \setlength{\topsep}{0.4ex}%
  }%
}{\end{list}}
\newcounter{algline}
\newcommand{\State}{\refstepcounter{algline}\item\makebox[1.8em][r]{\scriptsize\thealgline}\quad}
\newcommand{\Require}[1]{\State\textbf{Input:} #1}
\newcommand{\Ensure}[1]{\State\textbf{Output:} #1}
\newcommand{\For}[1]{\State\textbf{for }#1\textbf{ do}}
\newcommand{\EndFor}{\State\textbf{end for}}
\newcommand{\If}[1]{\State\textbf{if }#1\textbf{ then}}
\newcommand{\EndIf}{\State\textbf{end if}}
\newcommand{\Return}[1]{\State\textbf{return }#1}
\newcommand{\Comment}[1]{\hfill\textcolor{gray}{\(\triangleright\) #1}}
\usepackage{booktabs}
\usepackage{multirow}
\usepackage{makecell}
\usepackage{graphicx}
\usepackage{xcolor}
\usepackage{natbib}
\bibpunct[, ]{(}{)}{,}{a}{}{,}
\usepackage[hypertexnames=false]{hyperref}
\hypersetup{
    colorlinks=true,
    linkcolor={red!50!black},
    citecolor={blue!50!black},
    urlcolor={blue!80!black},
    pdftitle={Leveraging Inference-Time Compute for Diffusion Models via Global Scheduling of Denoising Trajectories},
    pdfauthor={Yuan Cao, Yifu Tang, Hangqi Li, Zeyu Zheng}
}

\newtheorem{theorem}{Theorem}
\newtheorem{lemma}{Lemma}
\newtheorem{proposition}{Proposition}
\newtheorem{corollary}{Corollary}
\newtheorem{definition}{Definition}
\newtheorem{assumption}{Assumption}
\newtheorem{remark}{Remark}

\let\amsthmproof\proof
\let\endamsthmproof\endproof
\renewcommand{\proof}[1]{\amsthmproof[#1]}
\renewcommand{\endproof}{\endamsthmproof}
\newcommand{\Halmos}{}

\newcommand{\GAINS}{\textsc{GAINS}}
\newcommand{\E}{\mathbb{E}}
\newcommand{\R}{\mathbb{R}}
\newcommand{\cL}{\mathcal{L}}
\newcommand{\cG}{\mathcal{G}}
\newcommand{\cD}{\mathcal{D}}
\newcommand{\cH}{\mathcal{H}}
\newcommand{\cA}{\mathcal{A}}
\newcommand{\cX}{\mathcal{X}}
\newcommand{\cC}{\mathcal{C}}
\newcommand{\bsigma}{\boldsymbol{\sigma}}

\begin{document}

\title{Leveraging Inference-Time Compute for Diffusion Models via Global Scheduling of Denoising Trajectories}
\author{Yuan Cao$^{1}$ \quad Yifu Tang$^{2}$ \quad Hangqi Li$^{2}$ \quad Zeyu Zheng$^{2}$\thanks{Emails: \texttt{caoyuanleo@stu.pku.edu.cn}; \texttt{yifutang@berkeley.edu}; \texttt{lihq@berkeley.edu}; \texttt{zyzheng@berkeley.edu}.\\ * Corresponding author.}\\[2pt]
\small $^{1}$School of Mathematical Sciences, Peking University\\
\small $^{2}$Department of Industrial Engineering and Operations Research \& BAIR Lab, University of California, Berkeley\\[2pt]
}
\date{}
\maketitle

\begin{abstract}
Diffusion models generate a sample by traversing a denoising trajectory, a sequence of stochastic noise-reduction steps that progressively transforms pure noise into a draw from a target distribution. At deployment time, additional computation can improve the quality of a generated sample without retraining. The mechanism is simple: at each time step, the sampler draws several candidate noise samples, evaluates the resulting predictions with a quality criterion called the verifier, and retains the best candidate at the cost of one network evaluation per candidate. This raises a resource allocation question: given a fixed budget of function evaluations, how should search effort be distributed across the steps of the denoising trajectory? We formulate this question as a computational budget allocation problem and deliver the following analysis and results. First, we show that, to leading order in the step size, the expected gain from evaluating $K$ candidates at a step factorizes into the product of an endogenous, step-specific sensitivity parameter and a universal sample-size factor equal to the expected best of $K$ standard-normal draws. Second, for a fixed sensitivity profile, the optimal allocation solves a separable concave integer program with water-filling structure. At fixed total sensitivity, its advantage over uniform allocation increases with sensitivity dispersion in the majorization order. Third, we prove that when sensitivities vary across problem instances, no adaptive policy can avoid worst-case regret that grows linearly in the trajectory length, which motivates a design that anchors the allocation offline and adapts online only to recover instance-specific slack. We extend the analysis from independent random search to a broader family of local search operators, and we instantiate the framework as an implementable algorithm. Numerical experiments on three families of diffusion samplers demonstrate that the proposed allocation attains the quality of the uniform benchmark with 20 to 50 percent fewer function evaluations.%
\end{abstract}

\medskip
\noindent\textbf{Key words:} computational budget allocation; simulation optimization; diffusion models; water-filling; sequential search; regret lower bound

\section{Introduction}
\label{sec:intro}

Diffusion models have emerged over the past several years as a leading methodology for generating samples from complex, high-dimensional distributions \citep{ho2020denoising,song2019generative,song2021scorebased}. Concrete applications include high-resolution image synthesis \citep{rombach2022high}, waveform audio synthesis \citep{kong2020diffwave}, and molecular conformation generation \citep{xu2022geodiff}. In these applications, deployment often involves generating many samples under a fixed compute budget, making the trade-off between sample quality and inference cost central to deployment. A diffusion model generates a sample by traversing a \emph{denoising trajectory}: starting from pure Gaussian noise, the sampler applies a sequence of $T$ learned noise-reduction steps, each of which invokes one forward pass through a large neural network, until a sample from the target distribution emerges. We call each forward pass through the denoising network a \emph{function evaluation}; standard sampling therefore uses exactly $T$ function evaluations per sample. Recent work shows that spending more than $T$ function evaluations per sample can improve sample quality without retraining the underlying network, a regime commonly referred to as inference-time scaling \citep{ma2024inference,ramesh2025nts}. The mechanism is a form of sequential stochastic search. At each time step, the sampler draws several candidate noise samples instead of one. For each candidate, the denoising network produces a prediction of the eventual sample, and a quality criterion, which we call a \emph{verifier}, scores that prediction. The sampler retains the candidate with the highest score; each additional candidate therefore adds one function evaluation. This mechanism creates two design choices: how to search within a time step and where to place computation along the trajectory.

Within-step search allocates extra network evaluations among candidate noise samples at one time step and retains the candidate with the highest verifier score. Related inference-time methods search over initial or intermediate noise variables \citep{ma2024inference,tang2024dno,qi2024noises,ramesh2025nts}, maintain particle or tree populations \citep{singhal2025fk,kim2025psi,zhang2025classical}, introduce verifier-free search or noise-aware pruning \citep{zhang2025tscend,ttsnap2025}, and adapt computation across denoising steps \citep{kim2025rbf,lee2025abcd,zhang2025adadiff,ye2025tpdm,yeh2025demons,sundaresha2025vt}. These methods compare candidates at different granularities---within a step, across particles, or across complete trajectories. Our analysis applies to procedures that expose a per-step candidate-count decision, and asks the complementary question of how a fixed total budget should be distributed along the trajectory.

We study that allocation question directly: where along the denoising trajectory should a fixed budget be placed? Replacing one candidate noise sample with a better one improves final quality more at some steps than at others, with the important steps varying across model families (Section~\ref{sec:offline}). Common baselines use uniform budgets or fixed windows, while recent online procedures use heuristic schedules. We formulate the allocation problem explicitly: given a total budget of $B$ function evaluations for one trajectory of $T$ steps, choose per-step candidate counts $K_1, \ldots, K_T$ summing to $B$ to maximize the expected verifier score of the final sample.

The problem combines two established resource-allocation perspectives: simulation budget allocation within each step and sequential resource allocation across steps. The per-step subproblem, in which $K$ candidates are evaluated and the best is retained, is a fixed-budget selection problem of the kind studied in ranking and selection \citep{chen2000simulation,kaufmann2016complexity}. The across-step problem, in which a common budget is divided among $T$ heterogeneous stages, is a discrete resource allocation problem with a separable concave objective \citep{ibaraki1988resource}. The resulting system has two structural features. First, the stages are coupled in sequence: the noise candidate selected at step $t$ determines the state from which step $t-1$ proceeds, so search effort expended early in the trajectory alters the distribution of opportunities available later. Second, search value varies across instances, including prompts and initial noise draws, and allocation decisions are made online in a single forward pass, so each step's expenditure is committed before later opportunities are observed. We analyze these features through a small-noise expansion of the leading-order coupling and a regret analysis of online adaptation.

The paper makes four contributions.

First, we formulate inference-time search at two levels: a local operator selects candidates within each step, and a global scheduler allocates the total budget across steps. The per-step budget links the two levels as the selected latent propagates along the trajectory. Standard sampling, best-of-$N$ selection, and recent search procedures arise as special cases.

Second, we develop the allocation theory. A small-noise expansion factorizes the leading-order gain as $\sigma_t a_K$, separating step sensitivity from a universal Gaussian order-statistic factor. The resulting separable concave integer program has a water-filling solution, and its advantage over uniform allocation increases with sensitivity dispersion at fixed total sensitivity. The same structure extends to local search operators through their operator-specific gain sequences (Theorems~\ref{thm:loc-scale}, \ref{thm:offline}, and~\ref{thm:general-gain}; Proposition~\ref{prop:majorization}).

Third, we quantify the value of online adaptation. Every adaptive policy incurs worst-case regret of order $T$ in a sequential allocation game, and the bound transfers to a realizable diffusion class (Theorem~\ref{thm:lower}, Propositions~\ref{prop:transfer} and~\ref{prop:realizable}). On non-adversarial instances, instance-adaptive allocation gains exactly the Jensen gap over the best fixed allocation (Proposition~\ref{prop:jensen}), motivating an offline anchor with online adjustment.

Fourth, we characterize the gain sequences of $\epsilon$-greedy exploration and local perturbation search. Concave gains yield water-filling marginals, while affine gains concentrate discretionary budget on the most sensitive steps. We instantiate these sequences in \GAINS{}, which matches the uniform benchmark with 20--50 percent fewer function evaluations across three sampler families and improves on it at equal budgets (Theorem~\ref{thm:general-gain}).

\subsection{Related Literature}
\label{sec:related}

Our work draws on, and contributes to, several literatures in operations research and machine learning.

\emph{Resource allocation and simulation-based selection.} The surrogate problem from our leading-order analysis is classical: it maximizes a separable concave objective over integer allocations under a budget. \citet{ibaraki1988resource} give a comprehensive treatment, and greedy marginal allocation underlies our water-filling characterization. The within-step task of evaluating $K$ candidates and retaining the best connects to optimal computing budget allocation \citep{chen2000simulation}, fixed-budget best-arm identification \citep{kaufmann2016complexity,lattimore2020bandit}, sequential information collection \citep{frazier2008knowledge}, and costly sequential search \citep{weitzman1979optimal}. In our setting, candidates are stochastic and state-induced, and effort at one step changes later opportunity distributions; the resulting per-step weights therefore arise endogenously as verifier-gradient sensitivities. Our stopping rule uses feedback in place of a reservation-value calculation.

\emph{Sequential allocation and dynamic coupling.} The across-step problem shares its information structure with dynamic and stochastic knapsacks \citep{kleywegt1998dynamic}, bandits with knapsacks \citep{badanidiyuru2018bandits}, and online allocation with budget pacing \citep{balseiro2019learning,balseiro2023best}: a fixed budget is spent irrevocably over ordered heterogeneous opportunities. These models supply the adversarial and stochastic benchmarks used in our regret analysis. Lagrangian decompositions for weakly coupled stochastic dynamic programs \citep{adelman2008relaxations} and restless bandits \citep{whittle1988restless} motivate marginal-threshold views, while serial inspection models capture downstream propagation of earlier choices \citep{raz1986survey}. Our stages combine both features, with the allocation itself perturbing the stochastic dynamics along the sampling trajectory.

\emph{Inference-time scaling of diffusion models.} Within machine learning, a growing body of work improves generation by expending additional computation at inference time, paralleling test-time scaling for large language models \citep{snell2024scaling,brown2024monkeys}. Diffusion and flow-model methods search over initial or intermediate noise, particles, or trees; learn or reuse intermediate scoring signals; prune candidates before full denoising; or reallocate computation across steps \citep{ma2024inference,tang2024dno,qi2024noises,ramesh2025nts,kim2025rbf,singhal2025fk,kim2025psi,zhang2025classical,zhang2025tscend,ttsnap2025,lee2025abcd,zhang2025adadiff,ye2025tpdm,yeh2025demons,sundaresha2025vt}. Our work isolates the allocation layer: it takes the local search operator as given and derives how its candidate budget should be distributed across denoising steps.

\emph{Theoretical analyses of diffusion samplers.} \citet{gao2024reward} develop a continuous-time reinforcement learning framework for reward-directed diffusion and show that the optimal exploration policy has time-varying variance, a structure consistent with the location-scale model we derive; \citet{tang2024contractive} characterize per-step error sensitivities for score estimation, conceptually related to, though distinct from, our verifier-score sensitivities; and \citet{chen2023sampling} establish convergence guarantees for score-based samplers that underpin the stochastic differential equation discretization we employ. Related efforts in operations research adapt the diffusion model itself rather than its sampling budget \citep{liu2026heavytail,jia2024structured}.

\medskip
\noindent\emph{Organization.} The rest of this paper is organized as follows. Section~\ref{sec:formulation} describes the sequential sampling system and states the budget allocation problem. Section~\ref{sec:framework} introduces the two-level decomposition into local search operators and a global scheduler. Section~\ref{sec:analysis} develops the structural theory: the location-scale factorization, the water-filling characterization of the optimal offline allocation, the worst-case limits of adaptive allocation, and the extension to general local search operators. Section~\ref{sec:algorithm} presents the \GAINS{} algorithm. Section~\ref{sec:experiments} reports numerical experiments. Section~\ref{sec:conclusion} concludes. Proofs of all results are provided in Appendices~\ref{app:proofs} and~\ref{app:general}, and Appendix~\ref{app:mdp} presents a Markov decision process formulation that situates our formulation and several existing procedures within a common framework.

\section{Model and Problem Formulation}
\label{sec:formulation}

This section describes the sequential sampling system underlying diffusion models, defines the verifier and the budget of function evaluations, and states the allocation problem studied in the remainder of the paper. Each step advances the current latent with a fresh noise draw and one forward pass through the network. Readers familiar with diffusion models may proceed directly to Section~\ref{sec:nts-problem}.

\subsection{Diffusion Models as Sequential Stochastic Systems}
\label{sec:diffusion-primer}

A diffusion model generates a sample in $\R^d$ over $T$ discrete denoising steps, indexed $t = T, T-1, \ldots, 1$. We write $x_t \in \R^d$ for the system state (the \emph{latent}) at step $t$, with $x_T$ the initial Gaussian noise and $x_0$ the final sample. In the denoising diffusion probabilistic modeling formulation of \citet{ho2020denoising}, a forward process gradually perturbs data with Gaussian noise, and a learned backward process converts noise back into data.
At each step, the sampler combines two operations: the network computes a deterministic denoising update, and the sampler adds a Gaussian perturbation whose scale is set by the noise schedule. The notation below makes these two parts explicit.

Setting $\alpha_t = 1-\beta_t$ and $\bar{\alpha}_t = \prod_{s=1}^{t}\alpha_s$ for a fixed noise schedule $\beta_t\in(0,1)$, a sample is produced by iterating
\begin{equation}
  x_{t-1}
  = \tfrac{1}{\sqrt{1 - \beta_t}}\left(x_t - \tfrac{\beta_t}{\sqrt{1 - \bar{\alpha}_t}}\,\varepsilon_\theta(x_t, t)\right) + \tilde\sigma_t\,\varepsilon_t,
  \qquad \varepsilon_t\sim\mathcal{N}(0, I_d),
  \label{eq:ddpm-reverse}
\end{equation}
from $x_T\sim\mathcal{N}(0, I_d)$ down to $x_0$, where $\varepsilon_\theta$ is a learned noise predictor, $\tilde\sigma_t$ is a fixed reverse-process noise scale, $\varepsilon_t \in \R^d$ is the noise injected at step $t$, and $I_d$ denotes the $d\times d$ identity matrix. Score-based samplers \citep{song2019generative}, flow-based samplers \citep{lipman2023flow,liu2023flow}, and accelerated samplers \citep{song2021ddim,lu2022dpm} share the same structure: a $T$-step backward trajectory driven by per-step stochastic noise $\varepsilon_t$. Two features of this system deserve emphasis for what follows. First, each step consumes one forward pass through a large neural network, which constitutes the dominant computational cost of sampling. Second, the injected noise $\varepsilon_t$ is the sole source of randomness at step $t$, so that intervening on $\varepsilon_t$ is the natural lever for influencing sample quality at generation time.

\subsection{Verifier and Budget}

The quality of a generated sample is measured by an exogenous scoring criterion, which we now define.

\begin{definition}\label{def:verifier} (Verifier.)
Let $\cX \subseteq \R^d$ denote the sample space (for instance, images or latent representations) and $\cC$ the space of conditioning signals (for instance, text prompts or class labels). A \emph{verifier} is a function $v : \cX \times \cC \to \R$ that assigns to each sample $x \in \cX$ a scalar quality score $v(x, c)$ against the conditioning signal $c \in \cC$. The verifier is fixed throughout the search and is treated as a black box; the analysis in Section~\ref{sec:analysis} additionally imposes a smoothness condition on the score map induced by $v$ (Assumption~\ref{asm:smooth}).
\end{definition}

Verifiers of this kind are the diffusion-model counterpart of the outcome verifiers employed in the test-time scaling of large language models \citep{cobbe2021gsm8k,lightman2023verify}; concrete examples used in our numerical experiments include the average luminance of an image and the size of its compressed representation.

\begin{definition}\label{def:nfe} (Budget of function evaluations.)
A single \emph{function evaluation} is one forward pass through the denoising network, the dominant cost in diffusion sampling. Standard sampling consumes exactly $T$ function evaluations. The scheduler receives a total budget of $B \ge T$ function evaluations per sample; the discretionary portion $B - T$ is available for search.
\end{definition}

\subsection{The Noise Trajectory Search Problem}
\label{sec:nts-problem}

We can now state the decision problem. We abstract the one-step transition \eqref{eq:ddpm-reverse} as a map $F_\theta : \R^d \times \{1, \ldots, T\} \times \cC \times \R^d \to \R^d$ that carries $(x_t, t, c, \varepsilon_t)$ to $x_{t-1}$; this abstraction is formalized in Section~\ref{sec:framework}.

\begin{definition}\label{def:nts} (Noise trajectory search problem.)
Given a pretrained diffusion model with one-step transition $F_\theta$, a conditioning signal $c \in \cC$, a verifier $v(\cdot, c)$ as in Definition~\ref{def:verifier}, and a total budget of $B$ function evaluations as in Definition~\ref{def:nfe}, design a sequential search procedure that expends at most $B$ function evaluations and outputs a sequence of noise realizations $\{\varepsilon_t^*\}_{t=1}^T$, so as to maximize the expected verifier score $\E[v(x_0, c)]$ of the resulting final sample.
\end{definition}

Definition~\ref{def:nts} describes a sequential allocation problem: the noise realization selected at step $t$ shapes the state $x_{t-1}$, which in turn determines which noise realizations are effective at subsequent steps $t' < t$. The search proceeds online in a single pass from $t = T$ down to $t = 1$, so expenditure at a step is committed before later opportunities are observed. At each step, the scheduler balances the value of further local search against preserving budget for the remainder of the trajectory. The two-level decomposition of the next section provides the structure through which we analyze this trade-off.

\section{A Two-Level Decomposition}
\label{sec:framework}

This section decomposes the noise trajectory search problem into two coupled components: a \emph{local search operator} $\cL_t$ that explores candidate noise realizations at a single step, and a \emph{global scheduler} $\cG$ that decides how the budget $B$ is divided across the $T$ steps. The local operator updates the latent state passed to later steps, so the components are coupled through the trajectory; the scheduler's control input to the local operator is the per-step budget $K_t$. Consequently, any local operator implementing this interface can be paired with any global schedule. This separation allows us to study the allocation question, which is the focus of this paper, while treating the local search procedure as a modular building block; Figure~\ref{fig:architecture} depicts the architecture.

\subsection{Transition Abstraction}

To separate within-step search from cross-step scheduling, we first write a single reverse step in an abstract, sampler-agnostic form. Every diffusion-family sampler generates a sample by traversing $T$ discrete steps, and at each step applies
\begin{equation}
  x_{t-1} = F_\theta(x_t, t, c, \varepsilon_t),
  \label{eq:transition}
\end{equation}
where $x_t$ is the latent at step $t$, $c$ is the conditioning signal, and $\varepsilon_t$ is the noise variable injecting stochasticity. The final step yields the sample $x_0$, which the verifier scores as $v(x_0, c)$. The abstraction \eqref{eq:transition} is deliberately model-agnostic. For stochastic samplers, including the formulation \eqref{eq:ddpm-reverse}, the variable $\varepsilon_t$ is the injected Gaussian noise. For deterministic samplers such as flow-matching solvers of ordinary differential equations \citep{lipman2023flow}, stochasticity can be introduced by replacing a deterministic step with a stochastic differential equation step of matched marginal distribution \citep{singh2024stochastic,kim2025rbf}, after which the same abstraction applies.

\begin{figure}[t]
\centering
\includegraphics[width=0.58\linewidth]{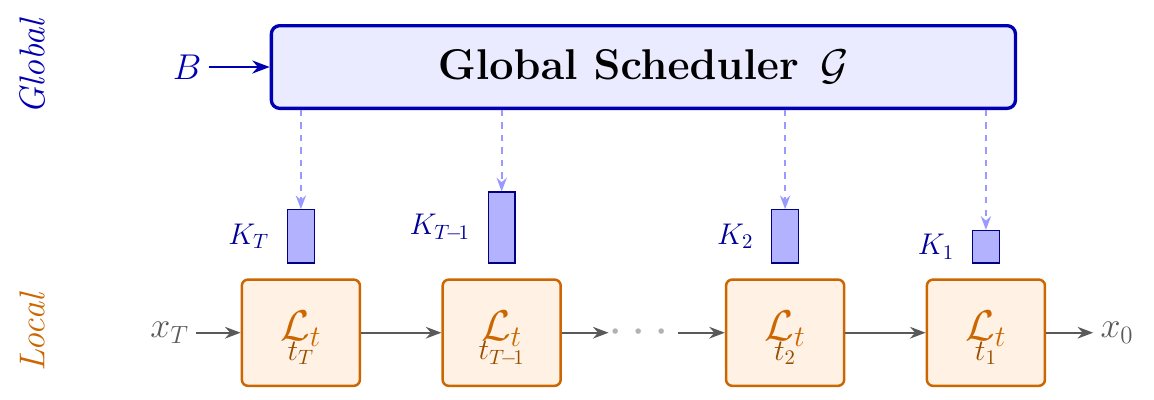}
\caption{The two-level decomposition of noise trajectory search. The global scheduler $\cG$ (top) receives the budget $B$ and distributes per-step budgets $\{K_t\}$ across the trajectory. At each step, the local search operator $\cL_t$ (bottom) evaluates $K_t$ candidate noise realizations and retains the best. The bars depict a non-uniform allocation, in which steps where replacing the noise candidate yields larger quality gains receive larger budgets.}
\label{fig:architecture}
\end{figure}

\subsection{Local Search Operator}
\label{sec:local-operator}

At step $t$, the local operator spends its per-step budget $K_t$ by comparing $K_t$ candidate noise realizations and keeping the best-scoring one. Concretely, given the current state $(x_t, c)$, the verifier $v$, and the transition $F_\theta$, it draws $\{\varepsilon_t^{(k)}\}_{k=1}^{K_t}$, advances each candidate through the transition, and maps each resulting latent to a prediction of the final clean sample by means of the standard one-step denoiser $D_\theta$ \citep{song2021scorebased}:
\begin{equation}
  x_{t-1}^{(k)} = F_\theta(x_t, t, c, \varepsilon_t^{(k)}),
  \qquad
  \hat{x}_0^{(k)} = D_\theta(x_{t-1}^{(k)}, t-1, c).
  \label{eq:x0_prediction}
\end{equation}
The denoiser $D_\theta$, which predicts the final clean sample from any intermediate latent, is distinct from the one-step transition $F_\theta$. The operator computes the score $s^{(k)} = v(\hat{x}_0^{(k)}, c)$ of each candidate and commits to the candidate with the highest score. Each candidate costs one function evaluation, so the operator consumes $K_t$ function evaluations in total. We write $\cL_t : (x_t, c, v, F_\theta, K_t) \mapsto x_{t-1}$ for the resulting map. Many search procedures proposed in the literature, including zero-order perturbation \citep{tang2024dno}, $\epsilon$-greedy exploration \citep{ramesh2025nts}, and particle-based refinement \citep{kim2025rbf}, are instances of this operator; for the purposes of our analysis, $\cL_t$ is a black-box map whose only control variable is the per-step budget $K_t$.

\subsection{Global Scheduler}
\label{sec:global-scheduler}

The local operator determines how to search at a given step; the global scheduler determines where along the trajectory to invest computational effort. A global scheduler
\[
\cG : \bigl(x_T, c, v, F_\theta, B, \{\cL_t\}_{t=1}^T\bigr) \longmapsto \bigl(x_0, \{x_t\}_{t=1}^T\bigr)
\]
walks the steps from $T$ down to $1$ and selects at each step a per-step budget $K_t \ge 1$ subject to $\sum_t K_t \le B$. Setting $K_t = 1$ applies a single base step, whereas $K_t > 1$ invokes the local operator on $K_t$ candidates. The allocation is executed \emph{online}: at step $t$, future search productivity is unknown and expenditure at the current step is committed.

This formulation recovers several existing strategies as special cases corresponding to fixed schedules. Plain diffusion sampling corresponds to $K_t = 1$ for all $t$; best-of-$N$ selection over initial noise corresponds to $K_T = N$ and $K_t = 1$ for $t < T$; and several noise-search and tree-search procedures correspond to particular local operators paired with hard-coded schedules. For policies that switch among verifiers of heterogeneous cost or backtrack to earlier steps, Appendix~\ref{app:mdp} embeds the formulation into a general Markov decision process and classifies ten existing procedures as policies within it.

\section{Analysis of Optimal Budget Allocation}
\label{sec:analysis}

This section develops the structural theory that underlies our allocation procedure. Section~\ref{sec:loc-scale} establishes that the expected gain from local search at a step factorizes into a step-specific sensitivity parameter and a universal sample-size factor. Section~\ref{sec:water-filling} builds on this factorization to characterize the optimal offline allocation. Section~\ref{sec:adaptivity} analyzes the achievable value of online adaptation. Section~\ref{sec:general-operators} extends the theory from independent random search to a general family of local search operators. Throughout Sections~\ref{sec:loc-scale} to~\ref{sec:adaptivity}, the local operator is \emph{random search}, which draws $K$ independent Gaussian noise candidates at each step and retains the best; this instantiation yields the cleanest statements, and Section~\ref{sec:general-operators} shows that the same structure persists for a broad operator family.

\subsection{Location-Scale Structure of Per-Step Gains}
\label{sec:loc-scale}

To analyze the effect of search at a single step, we adopt the standard continuous-time view of diffusion sampling, under which the update \eqref{eq:ddpm-reverse} is represented locally by one Euler-Maruyama step of a reverse-time stochastic differential equation \citep{song2021scorebased}. In plain terms, one small reverse step first follows a deterministic drift and then adds a Gaussian perturbation; the local search varies this perturbation while holding the current state fixed. The discrete update in \eqref{eq:ddpm-reverse} supplies the corresponding terms at the Euler scale: $b_t h$ represents its deterministic denoising change, and $g_t\sqrt h\,\xi$ represents its stochastic term, corresponding to $\tilde\sigma_t\varepsilon_t$ in \eqref{eq:ddpm-reverse}.
\begin{equation}
  X_{t-h} = \bar X_{t-h} + g_t\sqrt h\,\xi,
  \qquad
  \bar X_{t-h} := X_t + b_t(X_t,c)\,h,
  \qquad \xi\sim\mathcal N(0,I_d),
  \label{eq:em-step}
\end{equation}
where $h$ is the integration step size for one discrete solver step, $b_t$ is the drift, $g_t$ is the diffusion coefficient (both determined by the sampler and its noise schedule), $\bar X_{t-h}$ is the noise-free Euler point, and $\xi$ is the per-step noise over which the local operator searches. Let $D_\theta$ denote the one-step denoiser introduced in Section~\ref{sec:local-operator}, define the induced score map $\Psi_t(x;c) := v(D_\theta(x,t{-}h,c);c)$, and write $S_t := \Psi_t(X_{t-h};c)$ for the score of the one-step outcome. The gradient $\nabla\Psi_t$ is taken in the first argument and evaluated at $\bar X_{t-h}$. Before stating the theorem we fix notation, since the letters $g$ and $\sigma$ are each reused: the diffusion coefficient $g_t$ here is distinct from the per-iteration score gain $g_t^{(j)}$ of Section~\ref{sec:algorithm}, and the sensitivity $\sigma_t$ defined below is distinct from both the reverse-process noise scale $\tilde\sigma_t$ in \eqref{eq:ddpm-reverse} and the within-step empirical standard deviation $\sigma$ in Algorithm~\ref{alg:gains}.

Our analysis requires one regularity condition on the induced score map.

\begin{assumption}\label{asm:smooth} (Verifier smoothness.)
For each $t$ and $c$, the map $\Psi_t(\cdot\,;c)$ belongs to $C^2(\R^d)$, and there exists $L_t<\infty$ such that $\|\nabla^2\Psi_t(x;c)\|_{\mathrm{op}}\le L_t$ for all $x \in \R^d$.
\end{assumption}

Assumption~\ref{asm:smooth} requires that the composition of the denoiser and the verifier be twice differentiable with a globally bounded Hessian. The condition holds when both networks employ smooth activation functions with bounded first and second derivatives. For rectified-linear networks or discontinuous verifiers such as compressed file size, the results describe the corresponding smoothed score map. The assumption is imposed on the composite map $\Psi_t$ rather than on the verifier alone, since it is the composite map that the search perturbs. The following result describes the distribution of the one-step score under Assumption~\ref{asm:smooth}.

\begin{theorem}\label{thm:loc-scale} (Location-scale structure of per-step scores.)
Suppose Assumption~\ref{asm:smooth} holds. Define $\mu_t := \Psi_t(\bar{X}_{t-h};c)$ and $\sigma_t := g_t\sqrt{h}\,\|\nabla\Psi_t(\bar{X}_{t-h};c)\|$. Conditional on $(X_t,c)$, the following statements hold.
\begin{enumerate}
\item[(i)] If $\sigma_t>0$, define $u_t := \nabla\Psi_t(\bar X_{t-h};c)/\|\nabla\Psi_t(\bar X_{t-h};c)\|$ and $Z_t := \langle u_t,\xi\rangle$. Then $Z_t\sim\mathcal{N}(0,1)$. If $\sigma_t=0$, choose any standard Gaussian variable for $Z_t$. In both cases,
\[
  S_t = \mu_t + \sigma_t Z_t + R_t,
\]
where $|R_t|\le\tfrac{1}{2}L_t g_t^2 h\|\xi\|^2$ almost surely. The variables $Z_t$ and $R_t$ need not be independent.
\item[(ii)] As $g_t\sqrt{h}\to 0$, $\mathrm{Var}(S_t\mid X_t,c) = \sigma_t^2 + O_{d, L_t, \|\nabla\Psi_t\|}(g_t^3 h^{3/2})$, where the implicit constant depends on $d$, the Hessian bound $L_t$, and the gradient norm at $\bar X_{t-h}$.
\item[(iii)] For $K$ independent noise candidates $\xi^{(k)}\sim\mathcal N(0,I_d)$ at step $t$, let $S_t^{(k)}:=\Psi_t(\bar X_{t-h}+g_t\sqrt h\,\xi^{(k)};c)$. Then the expected gain from retaining the best of $K$ candidates satisfies
\begin{equation}
G_t(K) := \E\Bigl[\max_{1\le k\le K} S_t^{(k)}\,\Big|\,X_t,c\Bigr] - \mu_t = \sigma_t\,a_K + O_{K,d,L_t}(g_t^2 h),
\label{eq:gain-sde}
\end{equation}
where $a_K := \E[\max(Z_1,\ldots,Z_K)]$ for independent standard Gaussian variables $Z_1, \ldots, Z_K$, with $a_K\sim\sqrt{2\log K}$ as $K$ grows.
\end{enumerate}
\end{theorem}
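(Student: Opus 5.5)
The plan is to Taylor-expand the induced score map $\Psi_t$ to second order around the noise-free Euler point $\bar X_{t-h}$. Conditional on $(X_t,c)$, this point is deterministic and the only randomness is $\xi$. Write $\delta := g_t\sqrt h$. Taylor's theorem with Lagrange (or integral) remainder gives
\[
\Psi_t(\bar X_{t-h}+\delta\xi;c)=\Psi_t(\bar X_{t-h};c)+\delta\langle\nabla\Psi_t(\bar X_{t-h};c),\xi\rangle+R_t .
\]
Here $R_t=\tfrac12\delta^2\xi^\top\nabla^2\Psi_t(\eta;c)\xi$ for some $\eta$ on the segment. Assumption~\ref{asm:smooth} then yields $|R_t|\le\tfrac12 L_t\delta^2\|\xi\|^2$ almost surely. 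If $\sigma_t>0$, the linear term equals $\sigma_t\langle u_t,\xi\rangle=\sigma_t Z_t$, and $Z_t\sim\mathcal N(0,1)$ because $u_t$ is a unit vector fixed given $(X_t,c)$ and $\xi$ is isotropic Gaussian. If $\sigma_t=0$, the linear term vanishes, so any standard normal $Z_t$ works. This proves (i). No independence between $Z_t$ and $R_t$ is claimed or needed.

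For (ii), expand the variance:
\[
\mathrm{Var}(S_t)=\sigma_t^2+2\,\mathrm{Cov}(\sigma_t Z_t,R_t)+\mathrm{Var}(R_t).
\]
Apply Cauchy--Schwarz with the almost-sure bound on $R_t$ and the moments $\E\|\xi\|^2=d$ and $\E\|\xi\|^4=d(d+2)$. This gives $|\mathrm{Cov}(\sigma_tZ_t,R_t)|\le \sigma_t\cdot\tfrac12L_t\delta^2\sqrt{d(d+2)}$, which is $O(\|\nabla\Psi_t\|L_t d\,\delta^3)$ since $\sigma_t=\delta\|\nabla\Psi_t\|$. Likewise $\mathrm{Var}(R_t)\le\E R_t^2=O(L_t^2d^2\delta^4)=o(\delta^3)$ as $\delta\to0$. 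Together these give the stated $O(g_t^3h^{3/2})$ error with constants depending on $d$, $L_t$ and $\|\nabla\Psi_t\|$. The only care needed is tracking that the leading cross term is of order $\delta^3$ rather than $\delta^2$; this holds precisely because $\sigma_t$ already carries one factor of $\delta$.

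For (iii), apply (i) to each candidate: $S_t^{(k)}=\mu_t+\sigma_tZ^{(k)}+R^{(k)}$, where the $Z^{(k)}=\langle u_t,\xi^{(k)}\rangle$ are i.i.d.\ standard normal because the $\xi^{(k)}$ are independent. Since the max is 1-Lipschitz in the sup norm,
\[
\Bigl|\max_k S_t^{(k)}-\mu_t-\sigma_t\max_k Z^{(k)}\Bigr|\le\max_k|R^{(k)}|\le\tfrac12L_t\delta^2\sum_{k=1}^K\|\xi^{(k)}\|^2 .
\]
Taking conditional expectations yields $|G_t(K)-\sigma_ta_K|\le\tfrac12L_tKd\,g_t^2h$, which is the claimed $O_{K,d,L_t}(g_t^2h)$ remainder. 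One could sharpen $K$ to $\log K$ via a maximal inequality for chi-squares, but this is unnecessary. The asymptotic $a_K\sim\sqrt{2\log K}$ is the classical extreme-value fact for Gaussian maxima. The upper bound follows from the standard bound $\E\max\le\sqrt{2\log K}$ via the moment generating function and Jensen. The matching lower bound comes from the Mills-ratio tail estimate, showing $\max_kZ^{(k)}\ge(1-\epsilon)\sqrt{2\log K}$ with probability tending to one, combined with uniform integrability. This can be cited rather than reproved.

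The main obstacle is the bookkeeping in (ii): making the error order explicit while correctly accounting for the dependence between $Z_t$ and $R_t$. The Cauchy--Schwarz route above handles this without any independence assumption. The remaining steps are direct consequences of Taylor's theorem and the Lipschitz property of the maximum.
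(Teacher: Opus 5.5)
Your proposal is correct and follows essentially the same route as the paper: a second-order Taylor expansion with Lagrange remainder about $\bar X_{t-h}$ (the paper's Lemma~\ref{lem:taylor}) and projection onto the gradient direction for (i), the variance decomposition with Cauchy--Schwarz and $\E\|\xi\|^4=d(d+2)$ for (ii), and the $1$-Lipschitz property of the maximum with $\max_k|R^{(k)}|\le\sum_k|R^{(k)}|$, giving the bound $\tfrac12 K d L_t g_t^2 h$, for (iii). The only difference is that you also sketch the asymptotic $a_K\sim\sqrt{2\log K}$, which the paper simply cites from classical order-statistic theory.
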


The proof of Theorem~\ref{thm:loc-scale} is provided in Appendix~\ref{app:proofs}; the argument rests on a second-order expansion of the induced score map around the noise-free point, followed by a projection of the isotropic noise onto the verifier-gradient direction, with the properties of $a_K$ supplied by classical order-statistic theory. Before building the allocation theory upon this result, we pause to interpret its content. Part~(i) states that, to leading order in the effective noise magnitude $g_t\sqrt h$, the one-step score is a location-scale transformation of a standard Gaussian variable. The scale parameter
\begin{equation}
  \sigma_t \;=\; g_t\sqrt{h}\,\bigl\|\nabla\Psi_t(\bar{X}_{t-h};c)\bigr\|
  \label{eq:sensitivity}
\end{equation}
is the product of three interpretable quantities: the diffusion coefficient $g_t$, which measures how much randomness the schedule injects at step $t$; the square root of the step size; and the norm of the verifier gradient at the noise-free point, which measures how responsive the final score is to perturbations of the current state. We refer to $\sigma_t$ as the \emph{sensitivity} of step $t$. Steps with strong noise injection or steep verifier response are precisely the steps at which one additional candidate is worth the most, which accords with the empirical observation that search is far more productive at some steps than at others. Part~(ii) shows that the within-step variance of candidate scores estimates $\sigma_t^2$ to leading order; this observation is of practical significance, because it means the sensitivity, an a priori unobservable model quantity, can be estimated from the very scores that the search procedure generates. Part~(iii) delivers the factorization on which the remainder of the paper rests: the expected gain from $K$ candidates separates into the step-specific factor $\sigma_t$ and the universal, step-independent sample-size factor $a_K$. The concavity of the Gaussian order-statistic sequence $a_K$ \citep{david2003order} expresses the diminishing returns of local search, and the logarithmic growth $a_K \sim \sqrt{2\log K}$ quantifies how quickly those returns saturate.

\subsection{Optimal Offline Allocation}
\label{sec:water-filling}

Treating the sensitivities $\{\sigma_t\}$ as a fixed input and dropping the $O(g_t^2 h)$ remainder in \eqref{eq:gain-sde}, we study the fixed-profile leading-order surrogate
\begin{equation}
  \max_{\{K_t\}_{t=1}^T}\;\sum_{t=1}^{T}\sigma_t\,a_{K_t}
  \quad\text{subject to}\;\sum_{t=1}^{T}K_t=B,\;K_t\in\mathbb{Z}_{\ge 1}.
  \label{eq:alloc}
\end{equation}
Problem \eqref{eq:alloc} is a discrete resource allocation problem with a separable concave objective, a structure with a long history in operations research \citep{ibaraki1988resource}. We write $\Delta a_K := a_{K+1}-a_K$ for the discrete marginal gain of the sample-size factor and call step $t$ \emph{active} at an allocation $\{K_t^*\}$ if $K_t^* > 1$.

\begin{remark}\label{rmk:regime} (Validity of the surrogate.)
The remainder in \eqref{eq:gain-sde} carries a constant that grows linearly in $K$, so the aggregate remainder over the trajectory is of order $\max_t d\,L_t\,g_t^2 h \cdot B$, whereas the surrogate objective is $\sum_t \sigma_t a_{K_t}$. Dropping the remainder is therefore justified when the former is dominated by the latter; a sufficient per-step condition is $K_t\,g_t\sqrt h = o\bigl(a_{K_t}\|\nabla\Psi_t\|/(d\,L_t)\bigr)$ for every active step, which covers both fixed budgets with $g_t \sqrt h \to 0$ and budgets growing with $B$ provided $\max_t K_t\,g_t\sqrt h$ vanishes at the stated rate. The optimization statements below are exact for the surrogate problem~\eqref{eq:alloc}; this regime governs only when that surrogate is an accurate approximation to the original gain criterion.
\end{remark}

The following theorem characterizes every optimizer of \eqref{eq:alloc}.

\begin{theorem}\label{thm:offline} (Optimal allocation for the fixed-profile surrogate.)
Let $B\ge T$, and suppose that $\sigma_t$ depends only on the step index $t$, and not on $(c, x_t)$. Then every optimizer $\{K_t^*\}$ of \eqref{eq:alloc} satisfies the following properties.
\begin{enumerate}
\item[(i)] \emph{Instance-independence.} The same allocation $\{K_t^*\}$ optimizes \eqref{eq:alloc} for every $(c, x_t)$.
\item[(ii)] \emph{Common marginal-threshold conditions.} There exists a common marginal threshold $\lambda^* \ge 0$ such that for every active step $t$,
\begin{equation}
  \sigma_t\,\Delta a_{K_t^* - 1} \;\ge\; \lambda^* \;\ge\; \sigma_t\,\Delta a_{K_t^*},
  \label{eq:marginal-threshold}
\end{equation}
while every inactive step satisfies $\sigma_t\,\Delta a_1 \le \lambda^*$.
\item[(iii)] \emph{Monotonicity in sensitivity.} If $\sigma_{t_1} > \sigma_{t_2}$, then $K_{t_1}^* \ge K_{t_2}^*$. Moreover, no optimizer can have two active steps tied at $K_{t_1}^*=K_{t_2}^*=K\ge2$ when $\sigma_{t_1}\Delta a_K>\sigma_{t_2}\Delta a_{K-1}$. For the continuous relaxation based on the interpolant $\tilde a$ constructed in the proof, every relaxed optimizer $\{\widetilde K_t^*\}$ satisfies
\[
  \sigma_{t_1}>\sigma_{t_2}
  \ \text{and}\ 
  \widetilde K_{t_1}^*>1
  \quad\Longrightarrow\quad
  \widetilde K_{t_1}^*>\widetilde K_{t_2}^*.
\]
Thus distinct sensitivities may remain tied in the relaxation only when both coordinates are pinned at the floor.
\item[(iv)] \emph{Dominance over the uniform allocation.} If $T$ divides $B$, so that the uniform allocation $K_t \equiv B/T$ is feasible, then
\begin{equation}
  \sum_{t=1}^T \sigma_t\,a_{K_t^*} \;\ge\; \sum_{t=1}^T \sigma_t\,a_{B/T}.
  \label{eq:dominance}
\end{equation}
Independently of this divisibility condition, consider the associated continuous relaxation with the $C^1$ strictly concave interpolant $\tilde a$ constructed in the proof, whose derivative is positive and locally Lipschitz. The relaxed optimal value strictly exceeds the value of the uniform point whenever $B>T$ and the sensitivities are not all equal. When $B>T$, for any distinct $t_1,t_2$, the objective change generated by the feasible perturbation $K_{t_1}=B/T+\epsilon$, $K_{t_2}=B/T-\epsilon$ is
\[
  \epsilon(\sigma_{t_1}-\sigma_{t_2})\tilde a'(B/T)+O(\epsilon^2)
\]
for all sufficiently small $\epsilon>0$.
\end{enumerate}
\end{theorem}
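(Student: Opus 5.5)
The plan rests on one fact: the Gaussian order-statistic sequence $a_K$ is strictly increasing and strictly concave in $K$. Equivalently, the increments $\Delta a_K$ are positive and strictly decreasing (cited from \citealp{david2003order}, or proved directly by writing $\Delta a_K=\E[(Z_{K+1}-\max_{k\le K}Z_k)^+]$ and using stochastic monotonicity of the maximum). Given this, problem \eqref{eq:alloc} is a separable concave resource allocation problem. Part~(i) is then immediate: when $\sigma_t$ depends only on $t$, the objective and the feasible set do not involve $(c,x_t)$, so the set of optimizers is the same for every instance.

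For part~(ii), I will use the exchange argument familiar from \citet{ibaraki1988resource}. At an optimizer, moving one unit from any active step $s$ to any step $t$ must not increase the objective, so
\[
\sigma_t\Delta a_{K_t^*}\le\sigma_s\Delta a_{K_s^*-1}.
\]
I then set $\lambda^*:=\max_t \sigma_t\Delta a_{K_t^*}$, which is nonnegative. The exchange inequality gives $\lambda^*\le \sigma_s\Delta a_{K_s^*-1}$ for every active $s$, and $\lambda^*\ge\sigma_t\Delta a_{K_t^*}$ holds by definition. An inactive step has $K_t^*=1$, so the definition also gives $\sigma_t\Delta a_1\le\lambda^*$.

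For part~(iii), take $\sigma_{t_1}>\sigma_{t_2}$ and suppose $K_{t_1}^*<K_{t_2}^*$. Then $t_2$ is active, and I swap the two counts. The objective changes by
\[
(\sigma_{t_1}-\sigma_{t_2})\bigl(a_{K_{t_2}^*}-a_{K_{t_1}^*}\bigr)>0,
\]
which contradicts optimality. The tie statement follows from the exchange inequality applied with $s=t_2$ and $t=t_1$: a tie at $K\ge2$ would require $\sigma_{t_1}\Delta a_K\le\sigma_{t_2}\Delta a_{K-1}$.

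For the relaxation I need an interpolant $\tilde a$ on $[1,\infty)$ that is $C^1$, strictly concave, has positive locally Lipschitz derivative, and agrees with $a_K$ at the integers. I will build it from the derivative. Choose $\tilde a'$ piecewise linear with nodes at the half-integers, strictly decreasing, and positive, with the nodal values chosen so that $\int_K^{K+1}\tilde a'=\Delta a_K$. Strict decrease of $\Delta a_K$ is what makes a consistent choice possible. I expect this construction, together with checking positivity in the tail where $\Delta a_K\to0$, to be the main technical obstacle. A fallback is to smooth the piecewise-linear interpolant by convolving each kink with a small mollifier while keeping the integer values fixed.

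Given the interpolant, the relaxed problem has KKT conditions
\[
\sigma_t\tilde a'(\widetilde K_t)=\lambda \text{ if } \widetilde K_t>1,\qquad \sigma_t\tilde a'(1)\le\lambda \text{ otherwise}.
\]
Suppose $\sigma_{t_1}>\sigma_{t_2}$ and $\widetilde K_{t_1}>1$.
\begin{itemize}
\item If $\widetilde K_{t_2}>1$, then $\tilde a'(\widetilde K_{t_1})<\tilde a'(\widetilde K_{t_2})$, and strict decrease of $\tilde a'$ gives $\widetilde K_{t_1}>\widetilde K_{t_2}$.
\item If $\widetilde K_{t_2}=1$, the conclusion $\widetilde K_{t_1}>1=\widetilde K_{t_2}$ is trivial.
\end{itemize}

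For part~(iv), the discrete dominance \eqref{eq:dominance} holds because the uniform allocation is feasible when $T\mid B$. For the strict relaxed statement, the objective change under the perturbation $K_{t_1}=B/T+\epsilon$, $K_{t_2}=B/T-\epsilon$ is
\[
\sigma_{t_1}\bigl(\tilde a(B/T+\epsilon)-\tilde a(B/T)\bigr)+\sigma_{t_2}\bigl(\tilde a(B/T-\epsilon)-\tilde a(B/T)\bigr).
\]
Taylor expansion with a locally Lipschitz derivative gives a remainder of $O(\epsilon^2)$, which yields the displayed formula. The perturbation is feasible for small $\epsilon$ because $B/T>1$. If the sensitivities are not all equal, I choose $\sigma_{t_1}>\sigma_{t_2}$; the first-order term is then positive since $\tilde a'>0$, so the uniform point is not optimal. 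A relaxed optimum exists by compactness of the feasible simplex, so the relaxed optimal value strictly exceeds the uniform value.
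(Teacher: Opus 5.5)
Parts (i), (ii), the discrete part of (iii), the KKT argument for the relaxation, and the perturbation in (iv) are sound once a suitable $\tilde a$ is in hand. Your swap argument for monotonicity is a valid alternative to the paper's one-unit transfer, and it needs only that $a_K$ is increasing.

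\textbf{The gap: the interpolant.} The theorem asserts that an interpolant with these properties exists, so its construction is part of what you must prove. With $\tilde a'$ piecewise linear and $w_K:=\tilde a'(K+\tfrac12)$, the integral over $[K,K+1]$ is $(w_{K-1}+6w_K+w_{K+1})/8$ for $K\ge2$. Matching $\Delta a_K$ is therefore an infinite tridiagonal system that couples all the intervals. Strict decrease of $\Delta a_K$ does not guarantee that this system has a positive decreasing solution.

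For example, take increments $2+2^{-K}$ for $K\le5$ and $1+2^{-K}$ for $K\ge6$, and suppose $w$ is positive and non-increasing.
\begin{itemize}
\item Passing to the limit in the equations gives $\lim w_K=1$, so $w_K\ge1$ for all $K$.
\item The equation at $K=6$ then forces $w_5\le 1.125$.
\item The equation at $K=5$ forces $w_4\ge 8.375$.
\item The equation at $K=4$ forces $w_3<0$, a contradiction.
\end{itemize}
For the Gaussian sequence the system may happen to be solvable, but you would have to verify this for every $K$, and the proposal does not.

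The fallback does not repair this. Mollifying the kinks of the piecewise-linear interpolant of $a_K$ leaves the function affine on the interior of each unit interval, so it is not strictly concave. Also, convolving a concave function with a symmetric mollifier lowers it strictly at the kinks, so the integer values $a_K$ are not preserved.

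\textbf{How the paper avoids this.} It removes the coupling between intervals.
\begin{itemize}
\item First it fixes the derivative values at the integers so that they strictly interlace the increments: $v_1>\Delta a_1$ and $v_K=\tfrac12(\Delta a_{K-1}+\Delta a_K)$ for $K\ge2$, giving $v_{K+1}<\Delta a_K<v_K$.
\item Then, separately on each $[K,K+1]$, it chooses a strictly decreasing Lipschitz profile from $v_K$ to $v_{K+1}$ from a one-parameter power family, $u-(u-\ell)s^p$ or $\ell+(u-\ell)(1-s)^p$ with $p\ge1$.
\item The integral of this family sweeps the whole open interval $(v_{K+1},v_K)$, so it can be tuned to equal $\Delta a_K$ exactly.
\item Positivity in the tail is automatic, since every $v_K>0$.
\end{itemize}

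An even shorter fix is to let $\tilde a(x)$ be the mean of the law with distribution function $\Phi^x$. It equals $a_K$ at the integers and is smooth on $(0,\infty)$. It satisfies $\tilde a'(x)=-\int_{\R}\Phi(y)^x\log\Phi(y)\,dy>0$ and $\tilde a''(x)=-\int_{\R}\Phi(y)^x(\log\Phi(y))^2\,dy<0$.
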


The proof of Theorem~\ref{thm:offline} is provided in Appendix~\ref{app:proofs}. The integer statements follow from a one-unit exchange argument and admit the familiar water-filling interpretation: an optimal allocation balances sensitivity-weighted discrete marginal gains around a common threshold. Higher-sensitivity steps receive weakly more candidates; equalities can occur because allocations are integral and have a floor. These statements apply exactly to the fixed-profile surrogate~\eqref{eq:alloc}, while Remark~\ref{rmk:regime} specifies the approximation regime for the original criterion. The continuous perturbation in part~(iv) motivates the dispersion analysis that follows.

The first-order statement in part~(iv) admits a global counterpart. Recall that a sensitivity profile $\bsigma = (\sigma_1, \ldots, \sigma_T)$ \emph{majorizes} another profile $\bsigma'$ if, after sorting both in decreasing order, every partial sum of $\bsigma$ weakly dominates the corresponding partial sum of $\bsigma'$, with equality for the total; majorization is the standard partial order of dispersion at fixed total mass \citep{marshall2011inequalities}.

\begin{proposition}\label{prop:majorization} (Dispersion comparative statics.)
Let $B \ge T$, let $\tilde a$ be a $C^1$ strictly concave interpolant of $\{a_K\}$ on $[1,\infty)$ as constructed in the proof of Theorem~\ref{thm:offline}, and define, for $\bsigma \in \R_{+}^T$, the relaxed value $\tilde V^*(\bsigma) := \max\{\sum_t \sigma_t\,\tilde a(K_t) : \sum_t K_t = B,\ K_t \ge 1\}$ and the \emph{uniform surplus} $\Delta_{\mathrm{unif}}(\bsigma) := \tilde V^*(\bsigma) - \tilde a(B/T)\sum_t \sigma_t$. Then the following statements hold. (i) $\Delta_{\mathrm{unif}}$ is nonnegative, convex, and symmetric under permutations of the steps. (ii) $\Delta_{\mathrm{unif}}$ is Schur-convex: if $\bsigma$ majorizes $\bsigma'$, then $\Delta_{\mathrm{unif}}(\bsigma) \ge \Delta_{\mathrm{unif}}(\bsigma')$. (iii) If $B > T$, then $\Delta_{\mathrm{unif}}(\bsigma) = 0$ if and only if all sensitivities are equal.
\end{proposition}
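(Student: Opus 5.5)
The approach is to view the relaxed value $\tilde V^*(\bsigma)$ as a pointwise supremum of linear functions of $\bsigma$. For each fixed feasible $K$ in the polytope $\mathcal K := \{K\in\R^T : \sum_t K_t = B,\ K_t\ge 1\}$, the map $\bsigma\mapsto\sum_t\sigma_t\tilde a(K_t)$ is linear. The supremum over the compact set $\mathcal K$ is attained, because $\tilde a$ is continuous, and it is convex in $\bsigma$. Subtracting the linear term $\tilde a(B/T)\sum_t\sigma_t$ preserves convexity, so $\Delta_{\mathrm{unif}}$ is convex.

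Two further facts give the rest of part (i).
\begin{itemize}
\item Nonnegativity holds because the uniform point $K_t\equiv B/T$ lies in $\mathcal K$ (since $B\ge T$). Its value $\tilde a(B/T)\sum_t\sigma_t$ is therefore a lower bound for $\tilde V^*(\bsigma)$.
\item Symmetry holds because $\mathcal K$ is invariant under coordinate permutations. Permuting $\bsigma$ and $K$ simultaneously leaves the objective unchanged, so $\tilde V^*(\pi\bsigma)=\tilde V^*(\bsigma)$, and $\sum_t\sigma_t$ is also permutation invariant.
\end{itemize}

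Part (ii) then follows from the classical fact that a convex, symmetric function on a symmetric convex domain (here $\R_+^T$) is Schur-convex \citep{marshall2011inequalities}. I will include the one-line argument for completeness. If $\bsigma$ majorizes $\bsigma'$, then by the Hardy--Littlewood--P\'olya/Rado theorem $\bsigma'$ is a convex combination $\sum_\pi w_\pi\,\pi\bsigma$ of permutations of $\bsigma$. Jensen's inequality together with symmetry then gives $\Delta_{\mathrm{unif}}(\bsigma')\le\sum_\pi w_\pi\Delta_{\mathrm{unif}}(\pi\bsigma)=\Delta_{\mathrm{unif}}(\bsigma)$. Every permuted vector stays in $\R_+^T$, so no domain issues arise.

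For part (iii), the "if" direction is the main computation, and I expect it to be the only delicate point. If $\sigma_t\equiv s$, the objective is $s\sum_t\tilde a(K_t)$. When $s=0$ this is trivially zero. When $s>0$, strict concavity of $\tilde a$ and Jensen's inequality give $\sum_t\tilde a(K_t)\le T\tilde a(B/T)$ on $\mathcal K$, so the maximum equals the uniform value and $\Delta_{\mathrm{unif}}=0$.

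For the "only if" direction, I will invoke Theorem~\ref{thm:offline}(iv) directly. When $B>T$ and the sensitivities are not all equal, it states that the relaxed optimum strictly exceeds the uniform value, which is exactly $\Delta_{\mathrm{unif}}>0$.

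To keep the proof self-contained, I will also re-derive that statement. Pick $t_1,t_2$ with $\sigma_{t_1}>\sigma_{t_2}$. Since $B/T>1$, the perturbation $K_{t_1}=B/T+\epsilon$, $K_{t_2}=B/T-\epsilon$ stays feasible for small $\epsilon$. Its first-order gain is $\epsilon(\sigma_{t_1}-\sigma_{t_2})\tilde a'(B/T)$, and the error term is $O(\epsilon^2)$ because $\tilde a'$ is locally Lipschitz. This gain is positive because $\tilde a'>0$. The hypothesis $B>T$ is essential here: when $B=T$ the set $\mathcal K$ is the single point $K\equiv 1$ and $\Delta_{\mathrm{unif}}\equiv0$.
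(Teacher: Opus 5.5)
Your proposal is correct and follows essentially the same route as the paper's proof. Convexity comes from writing $\tilde V^*$ as a supremum of linear maps over the compact, permutation-invariant set $\mathcal K$, nonnegativity from feasibility of the uniform point, Schur-convexity from Hardy--Littlewood--P\'olya plus Birkhoff (your Rado step), and part (iii) from Jensen in the equal case and the first-order perturbation of Theorem~\ref{thm:offline}(iv) otherwise; your side remarks (the trivial $s=0$ case, that plain concavity suffices for the Jensen step, and that $B>T$ is needed because $\mathcal K$ collapses to a point when $B=T$) are accurate refinements rather than departures.
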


The proof of Proposition~\ref{prop:majorization} is provided in Appendix~\ref{app:proofs}; it combines the representation of $\tilde V^*$ as a pointwise maximum of linear functions with the classical fact that symmetric convex functions are monotone in the majorization order. The proposition upgrades the directional, first-order comparison of Theorem~\ref{thm:offline}(iv) to a global comparative-statics statement at the level of the continuous relaxation: holding total sensitivity fixed, any mean-preserving increase in the dispersion of the profile, in the majorization order, weakly raises the value of optimal reallocation over the uniform benchmark. This comparative-statics result provides a theoretical lens for the cross-model patterns examined in Section~\ref{sec:experiments}, where more dispersed sensitivity profiles are associated with larger gains from global reallocation.

Theorem~\ref{thm:offline} assumes a sensitivity profile common across instances. Under this assumption, the surrogate allocation problem~\eqref{eq:alloc} is solved offline, and part~(i) establishes its instance-independence. In applications, sensitivities also depend on the conditioning signal and the realized trajectory. The next subsection analyzes allocation in this heterogeneous setting.

\subsection{The Value of Adaptive Allocation}
\label{sec:adaptivity}

When the sensitivity profile $\bsigma$ varies across instances, the scheduler can adapt its budget to sensitivities observed so far. We analyze this setting through a sequential allocation game: at each round $t = 1, \ldots, T$, the policy observes the current reward distribution $\cD_t$, which in the diffusion problem is the law of the gain $\sigma_t \Delta a_1$ from one additional candidate; it then chooses $K_t \in \{0,1\}$ subject to $\sum_t K_t \le B$ and collects a reward distributed according to $\cD_t$ if $K_t = 1$ and zero otherwise. Regret is measured against the offline benchmark that knows the entire sequence $\cD_1,\ldots,\cD_T$ in advance.

\begin{theorem}\label{thm:lower} (Worst-case regret lower bound.)
For every, possibly randomized, adaptive policy there exists an instance of the sequential allocation game with $T = 2B$ such that the expected regret satisfies
\begin{equation}
  \E[R_T] \;\ge\; \tfrac{B}{4} \;=\; \Omega(T).
  \label{eq:lower}
\end{equation}
\end{theorem}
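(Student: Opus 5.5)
We prove the bound with Yao's minimax principle. We construct a distribution over instances with horizon $T=2B$ and show that every deterministic adaptive policy has expected regret at least $B/4$ against it. A randomized policy is a mixture of deterministic ones, so its expected regret against the same random instance is also at least $B/4$. Hence for each randomized policy some fixed instance in the support attains regret at least $B/4$. All reward distributions will be point masses, so the rewards themselves carry no information beyond what the policy already observes in $\cD_t$.

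\emph{Construction.} The first $B$ rounds all have reward $\cD_t=\delta_{1}$ (deterministic reward $1$). A fair coin $\theta\in\{0,2\}$, independent of everything else, sets the reward of each of the last $B$ rounds to $\delta_\theta$; write $\cD_t=\delta_{\theta}$ for $t>B$. The offline benchmark knows $\theta$. It earns $B$ when $\theta=0$ by spending on the first half, and $2B$ when $\theta=2$ by spending on the second half.

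\emph{Analysis.} While processing rounds $1,\dots,B$, the policy has seen only $\cD_1,\dots,\cD_t$, which do not depend on $\theta$. Its spending $m$ in the first half is therefore a deterministic number in $\{0,\dots,B\}$ that is independent of $\theta$. In the second half it can spend at most $B-m$, and it is optimal to spend all of it when $\theta=2$ and none when $\theta=0$. Its value is thus $m$ when $\theta=0$ and at most $m+2(B-m)=2B-m$ when $\theta=2$. The corresponding regrets are $B-m$ and at least $m$. Averaging over $\theta$ gives
\[
\E[R_T]\ \ge\ \tfrac12\bigl[(B-m)+m\bigr]=\tfrac{B}{2}\ \ge\ \tfrac{B}{4}.
\]
Since $T=2B$, this is $\Omega(T)$.

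\emph{Main obstacle.} The key step is making the information argument rigorous. We must show that the first-half decisions are measurable with respect to information independent of $\theta$, even though the policy may adapt to everything it observes. Revealing each $\cD_t$ only at round $t$ and making the first-half distributions deterministic and $\theta$-free settles this. A second point is consistency with the diffusion interpretation, where rewards $\sigma_t\Delta a_1$ must be nonnegative. Both rewards $0$ and $2$ can be realized by scaling $\sigma_t$, so the construction stays within the allowed class. The slack between $B/2$ and the stated $B/4$ leaves room if the constants need to be adjusted, for example if one prefers $\theta\in\{1/2,2\}$ with nonzero rewards.
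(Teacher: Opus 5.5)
Your proposal is correct and is essentially the paper's proof. Both use the same two-instance construction, which agrees on the first $B$ rounds (so first-half spending cannot depend on the instance) and diverges in the second half; your uniform-prior averaging plays the role of the paper's $\max\{\mathrm{Reg}_A,\mathrm{Reg}_B\}$ bound via the common expected first-half expenditure $S$, and your rewards $\{0,1,2\}$ are the paper's $\{0,\tfrac12,1\}$ scaled by $2$, which is why you get $B/2$ (under a $[0,1]$ reward normalization, halving them recovers exactly the paper's instances and the bound $B/4$).
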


The proof of Theorem~\ref{thm:lower} is provided in Appendix~\ref{app:proofs}; it rests on an indistinguishability construction in which two instances agree on the first half of the horizon and diverge on the second, so that any budget committed early is wasted on one instance and any budget conserved is wasted on the other. Because the per-round reward is bounded, the linear order is tight in this adversarial regime: policies incur regret $\Theta(T)$ on the constructed sequence. The theorem applies to policies without prior instance knowledge and motivates the design principle at the end of this subsection.

The game of Theorem~\ref{thm:lower} captures the allocation mechanism. The following proposition embeds its hard profiles into the diffusion problem whenever the family realizes them with uniformly small approximation error, and quantifies the resulting transfer of linear worst-case regret. It uses the extra-budget formulation in which each step receives one mandatory candidate and $e_t := K_t - 1 \ge 0$ discretionary candidates, together with the surrogate objective of Section~\ref{sec:water-filling}.

\begin{proposition}\label{prop:transfer} (Diffusion transfer of the lower bound.)
Fix $B \ge 1$, let $T = 2B$, and consider discretionary allocations $\{e_t\}$ with $\sum_{t=1}^{2B} e_t = B$ evaluated by the surrogate gain $F_{\bsigma}(\mathbf{e}) := \sum_{t=1}^{2B} \sigma_t\,(a_{1+e_t} - a_1)$. Consider the two sensitivity profiles
\[
  \sigma_t^{A} = \begin{cases} m, & t \le B,\\ 0, & t > B,\end{cases}
  \qquad
  \sigma_t^{B} = \begin{cases} m, & t \le B,\\ 2m, & t > B,\end{cases}
\]
with $m > 0$. Let $g_{\max}:=\max_{1\le t\le 2B}g_t$ for the diffusion-transfer statement. Then every, possibly randomized, policy that chooses $e_t$ upon reaching step $t$ on the basis of past observations suffers worst-case surrogate regret at least $m\,\Delta a_1\,B/2$ across the two profiles. Consequently, if there exist two diffusion instances whose gains satisfy $|G_t(K) - \sigma_t a_K| \le C K g_t^2 h$ uniformly for these two profiles, then the realized regret in the diffusion problem is at least $m\,\Delta a_1\,B/2 - O(B g_{\max}^2 h)$.
\end{proposition}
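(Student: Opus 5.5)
Since the two profiles coincide on the first half of the horizon, I will reduce the statement to an indistinguishability argument in the spirit of Theorem~\ref{thm:lower}, phrased in the discretionary variables $e_t$. On steps $t\le B$ the profiles $\bsigma^A$ and $\bsigma^B$ agree, so everything a policy observes through step $B$ has the same law under both. Hence the random variable $E_1:=\sum_{t\le B}e_t$, the discretionary budget spent in the first half, has the same distribution under both instances. Let $p:=\E[E_1]\in[0,B]$ denote its common mean. Any randomization inside the policy is absorbed into this common law.

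Next I bound the surrogate gain under each profile using concavity of $a_K$. Because $a$ is concave, each increment satisfies $a_{1+e}-a_1\le e\,\Delta a_1$, with equality at $e\in\{0,1\}$. The benchmarks are as follows.
\begin{itemize}
\item Under $\bsigma^A$ the offline optimum spreads one unit over each of the $B$ first-half steps, giving $mB\Delta a_1$. The policy earns at most $m\,\Delta a_1\,E_1$, since the second half contributes nothing. Its regret is therefore at least $m\Delta a_1(B-p)$ in expectation.
\item Under $\bsigma^B$ the optimum puts one unit on each of the $B$ second-half steps, giving $2mB\Delta a_1$. The policy earns at most $m\Delta a_1E_1+2m\Delta a_1(B-E_1)$, using $\sum_t e_t=B$. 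Its regret is therefore at least $m\Delta a_1\,p$.
\end{itemize}

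Taking the worse of the two instances gives regret at least $m\Delta a_1\max(B-p,p)\ge m\Delta a_1 B/2$, which is the claimed surrogate bound. The step I expect to be most delicate is confirming that these spread allocations really are the offline optima. By the same concavity bound, every allocation is dominated by $\Delta a_1\sum_t\sigma_te_t$. This quantity is maximized by putting all budget on the largest-$\sigma$ steps, and the one-per-step allocation attains that bound. The concavity of $a_K$ itself is the order-statistic fact already invoked after Theorem~\ref{thm:loc-scale}.

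For the diffusion transfer, I write the realized gain of any allocation as the surrogate gain plus an error. By the assumed uniform approximation, the error at step $t$ is at most $CK_tg_t^2h$ from the allocated term and at most $Cg_t^2h$ from the baseline $a_1$ term. Summing over $t$ and using $\sum_t(1+e_t)=3B$ bounds the total error by $O(Bg_{\max}^2h)$. This bound holds for both the policy's allocation and the benchmark. Realized regret therefore differs from surrogate regret by at most $O(Bg_{\max}^2h)$, which yields the stated bound.
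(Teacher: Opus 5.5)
Your proposal is correct and follows the paper's proof essentially step for step. It uses the common expected first-half expenditure, the concavity bound $a_{1+e}-a_1\le e\,\Delta a_1$, the two regret bounds $m\Delta a_1(B-p)$ and $m\Delta a_1 p$, and the $\sum_t K_t=3B$ error accounting applied once to the policy and once to the benchmark. Your extra baseline term $Cg_t^2h$ only changes the constant in the $O(Bg_{\max}^2h)$ correction.
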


The proof of Proposition~\ref{prop:transfer} is provided in Appendix~\ref{app:proofs}. The proposition embeds the hard sensitivity profiles into the diffusion setting whenever the family realizes them with uniform small-step error, transferring the worst-case regret lower bound up to the stated $O(B g_{\max}^2 h)$ term. It quantifies the implication of adversarial heterogeneity for the configurations studied in Section~\ref{sec:experiments}. Proposition~\ref{prop:realizable} in Appendix~\ref{app:proofs} gives a stylized instance class with driftless linear-Gaussian dynamics and a linear verifier in which the per-step gains equal $\sigma_t a_K$ exactly, so the transfer is unconditional.

Theorem~\ref{thm:lower} identifies the adversarial value of full adaptivity. Our final result of this subsection quantifies the value of instance-adaptive allocation on heterogeneous, non-adversarial instances.

\begin{proposition}\label{prop:jensen} (Value of adaptation as a Jensen gap.)
For a random sensitivity profile $\bsigma$ with mean $\bar{\bsigma} := \E\,\bsigma$, define the oracle value function
\begin{equation}
  V^*(\bsigma) := \max_{\substack{\sum_t K_t = B\\ K_t \ge 1}}\;\sum_t \sigma_t\,a_{K_t}.
  \label{eq:static-oracle}
\end{equation}
Then the following statements hold. (i) $V^*$ is convex. (ii) The best fixed allocation attains expected surrogate value $V^*(\bar{\bsigma})$, and the expected shortfall of the best fixed allocation relative to the instance-wise oracle equals the Jensen gap
\begin{equation}
  J(\bsigma) \;:=\; \E\bigl[V^*(\bsigma)\bigr] - V^*(\bar{\bsigma}) \;\ge\; 0.
  \label{eq:jensen-gap}
\end{equation}
\end{proposition}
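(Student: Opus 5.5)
Both parts rest on one observation: for each fixed feasible allocation $\mathbf{K}=\{K_t\}$ with $\sum_t K_t=B$ and $K_t\ge 1$, the surrogate value $\bsigma\mapsto\sum_t \sigma_t a_{K_t}$ is a linear function of the sensitivity profile. The feasible set
\[
\mathcal K_B:=\{\mathbf K\in\mathbb Z_{\ge1}^T:\textstyle\sum_t K_t=B\}
\]
is finite, since each $K_t\le B-T+1$. Hence $V^*$ in \eqref{eq:static-oracle} is the pointwise maximum of finitely many linear functions, $V^*(\bsigma)=\max_{\mathbf K\in\mathcal K_B}\langle\bsigma,\mathbf a_{\mathbf K}\rangle$ with $\mathbf a_{\mathbf K}:=(a_{K_1},\ldots,a_{K_T})$.

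For part (i), a maximum of affine functions is convex. For completeness one can also check it directly: given $\bsigma,\bsigma'$ and $\theta\in[0,1]$, let $\mathbf K$ be the maximizer for $\theta\bsigma+(1-\theta)\bsigma'$. Then
\[
V^*(\theta\bsigma+(1-\theta)\bsigma')=\theta\langle\bsigma,\mathbf a_{\mathbf K}\rangle+(1-\theta)\langle\bsigma',\mathbf a_{\mathbf K}\rangle\le\theta V^*(\bsigma)+(1-\theta)V^*(\bsigma').
\]
Existence of a maximizer follows from finiteness of $\mathcal K_B$. Nothing from Theorem~\ref{thm:offline} is needed.

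For part (ii), I first make precise what a fixed allocation is. It is a (possibly randomized) choice of $\mathbf K\in\mathcal K_B$ made before $\bsigma$ is realized, so it is independent of $\bsigma$. I assume $\E\|\bsigma\|<\infty$, so that $\bar\bsigma$ exists; then all expectations below are finite, because $a_K$ is bounded on $\mathcal K_B$.
\begin{enumerate}
\item For a deterministic $\mathbf K$, linearity of expectation gives $\E\langle\bsigma,\mathbf a_{\mathbf K}\rangle=\langle\bar\bsigma,\mathbf a_{\mathbf K}\rangle$.
\item Maximizing over $\mathbf K\in\mathcal K_B$ shows that the best deterministic fixed allocation attains exactly $V^*(\bar\bsigma)$. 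It does so at any maximizer of the surrogate problem~\eqref{eq:alloc} with profile $\bar\bsigma$, which connects to Theorem~\ref{thm:offline}.
\item A randomized fixed allocation is a mixture of deterministic ones, so its expected value is an average of terms each at most $V^*(\bar\bsigma)$. Randomization therefore cannot improve on $V^*(\bar\bsigma)$.
\item The instance-wise oracle picks the maximizer for each realized $\bsigma$ and earns $\E[V^*(\bsigma)]$.
\end{enumerate}
The shortfall is therefore $J(\bsigma)$ as in \eqref{eq:jensen-gap}. Its nonnegativity is Jensen's inequality applied to the convex $V^*$ from part (i). Alternatively, and more elementarily, $V^*(\bsigma)\ge\langle\bsigma,\mathbf a_{\mathbf K^\circ}\rangle$ for a $\bar\bsigma$-optimal $\mathbf K^\circ$; taking expectations gives $\E V^*(\bsigma)\ge V^*(\bar\bsigma)$.

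The main subtlety is conceptual rather than technical. I must specify the class of fixed allocations as those not depending on the realization of $\bsigma$ and handle randomized policies within it. I must also make sure that the identity in (ii) is an equality, which requires the interchange of expectation and the linear objective for each fixed $\mathbf K$. A measurable selection for the oracle's maximizer is trivial here because $\mathcal K_B$ is finite: break ties by a fixed ordering.
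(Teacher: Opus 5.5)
Your proposal is correct and follows the paper's proof essentially step for step. Writing $V^*$ as a pointwise maximum of finitely many linear maps gives convexity, linearity of expectation for each fixed allocation shows the best fixed allocation attains $V^*(\bar{\bsigma})$, and Jensen's inequality gives $J(\bsigma)\ge 0$. Your additional care about integrability of $\bsigma$, randomized fixed allocations, and tie-breaking in the oracle's selection is sound; it makes explicit what the paper leaves implicit.
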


The proof of Proposition~\ref{prop:jensen} is provided in Appendix~\ref{app:proofs}. Proposition~\ref{prop:jensen} identifies $J(\bsigma)$ as the population value of clairvoyance relative to the best static schedule: it is exactly the expected improvement from re-optimizing the allocation after the instance-level sensitivity profile is revealed. The quantity $J(\bsigma)$ is zero when sensitivity profiles are homogeneous, and it is strictly positive whenever profile variation changes the optimal allocation. Combining Theorems~\ref{thm:offline} and~\ref{thm:lower} with Proposition~\ref{prop:jensen} yields the design principle that guides the algorithm of Section~\ref{sec:algorithm}: anchor the allocation \emph{offline} at the population profile $\bar{\bsigma}$, where it enjoys the optimality guarantees of Theorem~\ref{thm:offline}, and \emph{adjust online} when observed instance-level signals reveal recoverable Jensen gap. Theorem~\ref{thm:lower} quantifies the adversarial regime in which this online value is unavailable.

\subsection{Extension to General Local Search Operators}
\label{sec:general-operators}

The results so far concern random search. Practical local operators reuse information across candidates, perturb incumbents, or interleave exploration with exploitation. This subsection shows that the fixed-profile surrogate allocation structure extends under four regularity conditions on the operator. The only operator-specific object the extension needs is a per-step gain curve $\phi_K^{\cL}$, the expected improvement from $K$ within-step iterations under the linearized score, defined precisely below. For score-dependent operators, we use a \emph{projected-score model} in which accept-or-reject comparisons are resolved by the linearized score; Appendix~\ref{app:general} quantifies the window on which this model and exact comparisons can differ. Random search retains the best of $K$ independent draws and is covered exactly by Theorem~\ref{thm:loc-scale}(iii). Operators that update an incumbent through sequential comparisons, such as $\epsilon$-greedy and local perturbation, use the projected-score model because the second-order remainder can propagate through the incumbent to subsequent iterations.

A local search operator $\cL$ at step $t$ produces candidate noise iterates $\xi^{(1)}, \ldots, \xi^{(K)}$ in $\R^d$, each evaluated through the transition; iterative operators may let $\xi^{(j+1)}$ depend on the earlier iterates. The gain from $K$ iterations is
\begin{equation}
  G_t^{\cL}(K) := \E\Bigl[\max_{1 \le j \le K} \Psi_t\bigl(\bar{X}_{t-h} + g_t\sqrt{h}\,\xi^{(j)};\,c\bigr)\,\Big|\,X_t,c\Bigr] - \mu_t,
  \label{eq:gain-general}
\end{equation}
with the convention $G_t^{\cL}(0) := 0$. Define the projected gain sequence $\phi_K^{\cL} := \E[\max_{1 \le j \le K} \langle u, \xi^{(j)}\rangle]$, where $\{\xi^{(j)}\}$ are the iterates generated when the linearized score direction is the unit vector $u$, and set $\phi_0^{\cL} := 0$.

\begin{assumption}\label{asm:local-search} (Local search regularity.)
The operator $\cL$ satisfies the following four conditions.
\begin{enumerate}
\item[(A1)] \emph{Strict monotone improvement.} $\phi_K^{\cL}$ is strictly increasing in $K$ for $K \ge 1$.
\item[(A2)] \emph{Diminishing returns.} The marginal gains $\Delta\phi_K^{\cL} := \phi_{K+1}^{\cL} - \phi_K^{\cL}$ are non-increasing for $K \ge 1$.
\item[(A3)] \emph{Direction independence.} The projected gain $\E[\max_{1 \le j \le K}\langle u, \xi^{(j)}\rangle]$ does not depend on the unit direction $u$.
\item[(A4)] \emph{Rotational equivariance.} For any orthogonal matrix $Q$, the iterates generated under score direction $Qu$ are distributed as $\{Q\xi^{(j)}\}$, where $\{\xi^{(j)}\}$ are the iterates generated under direction $u$.
\end{enumerate}
\end{assumption}

In words, conditions (A1) and (A2) say that additional within-step iterations keep helping but with diminishing returns, while (A3) and (A4) say that the operator does not privilege any particular direction of the noise space. The direction symmetry holds whenever the operator is built from isotropic primitives, such as Gaussian draws and uniformly directed perturbations, and accepts or rejects candidates on the basis of scores alone; condition (A3) is the consequence of (A4) that the proofs invoke directly, and we list it separately for clarity. Under these conditions, the factorization of Theorem~\ref{thm:loc-scale}(iii) generalizes as follows.

\begin{theorem}\label{thm:general-gain} (General gain factorization.)
Suppose Assumption~\ref{asm:smooth} holds, conditions (A3) and (A4) of Assumption~\ref{asm:local-search} hold for $\cL$, and the iterate moment bound $\E[\max_{1\le j\le K}\|\xi^{(j)}\|^2] \le M_K(\cL, d) < \infty$ holds for some sequence $M_K$. Then
\begin{equation}
  G_t^{\cL}(K) = \sigma_t\,\phi_K^{\cL} + O_{K,d,L_t,\cL}(g_t^2 h),
  \label{eq:general-gain-factorization}
\end{equation}
where $\sigma_t$ is the sensitivity \eqref{eq:sensitivity} and the implicit constant depends on $K$, $d$, $L_t$, and $M_K(\cL, d)$, but not on $g_t$ or $\nabla\Psi_t$. If in addition conditions (A1) and (A2) hold, the allocation consequences of Corollary~\ref{cor:general-alloc} apply.
\end{theorem}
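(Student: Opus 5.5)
The plan is to mirror the proof of Theorem~\ref{thm:loc-scale}(iii), working under the projected-score model in which the iterates are generated by comparisons of the linearized score. Write $\varepsilon := g_t\sqrt h$, $\bar x := \bar X_{t-h}$, and $w := \nabla\Psi_t(\bar x;c)$. Taylor's theorem with the Hessian bound of Assumption~\ref{asm:smooth} gives, for every iterate,
\[
  \Psi_t(\bar x+\varepsilon\xi^{(j)};c) = \mu_t + \varepsilon\langle w,\xi^{(j)}\rangle + R^{(j)},
  \qquad |R^{(j)}|\le \tfrac12 L_t\varepsilon^2\|\xi^{(j)}\|^2 .
\]
The main term is $\varepsilon\langle w,\xi^{(j)}\rangle$, and I would control the remainder separately.

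The first step is to compare the maximum of the true scores with the maximum of the linearized scores. Since the map $\max$ is $1$-Lipschitz in the sup norm,
\[
  \Bigl|\max_j \Psi_t(\bar x+\varepsilon\xi^{(j)};c) - \mu_t - \varepsilon\max_j\langle w,\xi^{(j)}\rangle\Bigr| \le \max_j|R^{(j)}| \le \tfrac12 L_t\varepsilon^2 \max_j\|\xi^{(j)}\|^2 .
\]
Taking conditional expectations and applying the moment bound $M_K(\cL,d)$ gives an error of at most $\tfrac12 L_t M_K(\cL,d)\, g_t^2 h$. This error is uniform in $w$, which yields the claimed dependence of the constant on $K$, $d$, $L_t$ and $\cL$ only.

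The second step identifies the main term. If $w=0$, then $\sigma_t=0$ and the main term vanishes, so the claim reduces to the remainder bound. Otherwise set $u_t := w/\|w\|$. Then
\[
  \varepsilon\,\E\bigl[\max_j\langle w,\xi^{(j)}\rangle\bigr] = \sigma_t\,\E\bigl[\max_j\langle u_t,\xi^{(j)}\rangle\bigr],
\]
where the iterates are those the operator generates when the score direction is $u_t$. By the definition of $\phi_K^{\cL}$ together with (A3), this expectation equals $\phi_K^{\cL}$ whatever $u_t$ happens to be; (A4) justifies (A3) by rotating an arbitrary reference direction onto $u_t$. Combining the two steps gives \eqref{eq:general-gain-factorization}.

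The final clause then follows as in Theorem~\ref{thm:offline}. Under (A1)–(A2) the surrogate $\sum_t\sigma_t\phi^{\cL}_{K_t}$ is separable and concave, so the exchange arguments carry over verbatim with $a_K$ replaced by $\phi_K^{\cL}$ (Corollary~\ref{cor:general-alloc}).

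I expect the main obstacle to be the gap between the projected-score model and the actual iterates of score-dependent operators. Under exact comparisons, the accept or reject decisions depend on $R^{(j)}$, so the realized iterates differ from those generated by the linearized score, and a decision flip can propagate through the incumbent. My plan is to define $G^{\cL}_t$ in the statement under the projected-score model. In that model the iterate law depends only on $u_t$, and the argument above is exact. I would defer the discrepancy to the window analysis of Appendix~\ref{app:general}, where flips occur only when two linearized scores differ by $O(\varepsilon^2)$, an event of probability $O(\varepsilon)$ with an $O(\varepsilon)$ cost.
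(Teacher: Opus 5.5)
Your proposal is correct and follows essentially the same route as the paper's proof. Both Taylor-expand each iterate via Lemma~\ref{lem:taylor}, bound the maximum using $|\max_j(a_j+b_j)-\max_j a_j|\le\max_j|b_j|$ and the moment bound to get the error $\tfrac12 L_t M_K(\cL,d)\,g_t^2h$, handle $\nabla\Psi_t=0$ separately, and invoke (A3)/(A4) under the projected-score model to identify the projected expectation with $\phi_K^{\cL}$.

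One caveat concerns your closing claim that acceptance flips have probability $O(g_t\sqrt h)$. The paper does not prove this; Remark~\ref{rmk:flip} notes it would need anti-concentration of the projected comparison gap and a lower bound on $\|\nabla\Psi_t\|$. The theorem does not rely on it, though, because you, like the paper, define $G_t^{\cL}$ under the projected-score model.
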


The proof of Theorem~\ref{thm:general-gain} is provided in Appendix~\ref{app:general}, together with a lemma verifying the moment bound, uniformly in $(c, x_t)$, for each operator considered in this paper; the proof linearizes every iterate via Lemma~\ref{lem:taylor}, chooses an arbitrary unit direction in the degenerate case $\nabla\Psi_t(\bar X_{t-h};c)=0$, and uses rotational equivariance only to show that the projected gain sequence is direction-free. The theorem establishes what we call the \emph{sensitivity-gain separation}: the step-dependent factor $\sigma_t$ and the operator-dependent sequence $\phi_K^{\cL}$ decouple multiplicatively. Given a fixed operator-specific gain sequence $\phi_K^{\cL}$, the fixed-profile allocation results stated in Corollary~\ref{cor:general-alloc} follow by replacing $a_K$ with $\phi_K^{\cL}$; the resulting schedule still depends jointly on the sensitivity profile and that gain sequence.

\begin{corollary}\label{cor:general-alloc} (Allocation for general operators.)
Under the conditions of Theorem~\ref{thm:general-gain}, with conditions (A1) and (A2) of Assumption~\ref{asm:local-search} in force and instance-independent sensitivities, the integer allocation problem $\max\{\sum_t \sigma_t \phi^{\cL}_{K_t} : \sum_t K_t = B,\ K_t\in\mathbb Z_{\ge1}\}$ has the following counterparts of Theorem~\ref{thm:offline}: the optimizer is instance-independent, satisfies the common marginal-threshold conditions \eqref{eq:marginal-threshold} with $\Delta a$ replaced by $\Delta\phi^{\cL}$, and is monotone in sensitivity. When $\phi_K^{\cL}$ is strictly concave, decreasing marginals yield the same water-filling characterization; when $\phi_K^{\cL}$ is affine, every optimizer concentrates the entire discretionary budget on the most sensitive steps. Furthermore, the conclusions of Proposition~\ref{prop:jensen} hold with $V^*$ defined through $\phi^{\cL}$.
\end{corollary}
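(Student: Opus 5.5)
The plan is to reread the proof of Theorem~\ref{thm:offline} and Proposition~\ref{prop:jensen} and record which properties of $a_K$ each step actually used. Each is then shown to be available for $\phi_K^{\cL}$ under (A1)--(A2), taking the factorization \eqref{eq:general-gain-factorization} of Theorem~\ref{thm:general-gain} as the justification for the surrogate $\sum_t\sigma_t\phi^{\cL}_{K_t}$. Three parts transfer directly. Instance-independence holds because the objective depends on $(c,x_t)$ only through $\sigma_t$, which is assumed instance-free. Monotonicity in sensitivity is a one-unit swap argument: if $\sigma_{t_1}>\sigma_{t_2}$ but $K_{t_1}<K_{t_2}$, exchanging the two counts changes the value by $(\sigma_{t_1}-\sigma_{t_2})(\phi_{K_{t_2}}-\phi_{K_{t_1}})$, which is strictly positive by (A1), contradicting optimality. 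The Jensen-gap statement also needs nothing new. $V^*(\bsigma)$ is a maximum over a finite set of allocations of the linear functions $\bsigma\mapsto\sum_t\sigma_t\phi_{K_t}$, so it is convex. For any fixed allocation, the expected value is $\sum_t\bar\sigma_t\phi_{K_t}$, so the best fixed allocation attains $V^*(\bar\bsigma)$, and Jensen's inequality then gives $J\ge0$.

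For the marginal-threshold conditions I would use the standard exchange argument for separable concave integer programs. Suppose an optimizer admits steps $s$ and $t$ with $K_t\ge2$ and $\sigma_s\Delta\phi_{K_s}>\sigma_t\Delta\phi_{K_t-1}$. Moving one unit from $t$ to $s$ then strictly improves the objective, which is a contradiction. Hence
\[
\min_{t\ \mathrm{active}}\sigma_t\Delta\phi_{K_t-1}\;\ge\;\max_{s}\sigma_s\Delta\phi_{K_s}.
\]
Any $\lambda^*$ lying between these two quantities satisfies the analogue of \eqref{eq:marginal-threshold}. An inactive step has $K_s=1$, so the right-hand side already includes $\sigma_s\Delta\phi_1$, which gives the inactive condition. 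Nonnegativity of $\lambda^*$ follows from $\Delta\phi\ge0$. Concavity (A2) enters at the converse: it makes the greedy marginal allocation, which assigns units one at a time to the largest current $\sigma_t\Delta\phi_{K_t}$, globally optimal. This converse is the water-filling characterization.

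The affine case is the step I expect to be the main obstacle, because concavity now holds only weakly and ties must be handled with care. If $\phi_K=\alpha+\beta K$ with $\beta>0$ by (A1), the objective becomes $\sum_t\sigma_t\alpha+\beta\sum_t\sigma_t K_t$. Maximizing this is a linear program over the shifted simplex $\{e_t=K_t-1\ge0,\ \sum_t e_t=B-T\}$. Every optimizer must therefore place all discretionary mass on $\arg\max_t\sigma_t$: any mass $e_s>0$ on a step with $\sigma_s<\max_t\sigma_t$ can be moved to a maximizing step for a strict gain of $\beta e_s(\max_t\sigma_t-\sigma_s)$. "Most sensitive steps" should accordingly be read as the set of maximizers. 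When that set has several elements, the split of the discretionary budget among them is arbitrary.

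Strict concavity in the other case ensures that the water-filling levels are essentially unique. The only non-uniqueness comes from ties in $\sigma_t\Delta\phi_{K}$, which I would note without resolving.
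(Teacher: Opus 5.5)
Your argument is correct and is essentially the paper's: one-unit exchanges give the threshold conditions, the linear objective $\beta\sum_t\sigma_tK_t$ gives the affine case, and convexity of $V^*$ as a maximum of finitely many linear maps gives the Jensen gap. Your full swap for monotonicity, with gain $(\sigma_{t_1}-\sigma_{t_2})(\phi_{K_{t_2}}-\phi_{K_{t_1}})>0$, is a slightly leaner variant that needs only (A1). One correction: (A2) is needed not only for the greedy converse but already for necessity, because the exchange says nothing about the diagonal pair $s=t$, and the inequality $\sigma_t\Delta\phi^{\cL}_{K_t}\le\sigma_t\Delta\phi^{\cL}_{K_t-1}$ at an active step, which is required for $\lambda^*$ to exist, comes from diminishing marginals, exactly as the paper notes for $t=s$.
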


The proof of Corollary~\ref{cor:general-alloc} is provided in Appendix~\ref{app:general}. The dichotomy in the corollary shows that the choice of local operator influences not only the rate of per-step improvement but also the \emph{shape} of the optimal schedule. Appendix~\ref{app:general} instantiates the framework on three operators and verifies Assumption~\ref{asm:local-search} for each; Table~\ref{tab:phi-summary} summarizes the outcome. Random search yields the strictly concave sequence $\phi_K = a_K$ and therefore water-filling schedules. Local perturbation search, which repeatedly perturbs the incumbent by a uniformly directed step of radius scale $\lambda$, yields in the projected-score idealization the exactly affine sequence $\phi_K = c_d(\lambda)(K-1)$ with drift constant $c_d(\lambda) = \lambda\sqrt d\,\Gamma(d/2)/(4\sqrt\pi\,\Gamma((d+1)/2))$, and therefore concentrated schedules. The $\epsilon$-greedy operator interpolates between the two regimes.

\begin{table}[t]
\centering
\caption{Operator-specific gain sequences in the projected-score idealization. All three operators satisfy Assumption~\ref{asm:local-search}; derivations and verifications appear in Appendix~\ref{app:general}.}
\label{tab:phi-summary}
\small
\begin{tabular}{@{}llll@{}}
\toprule
Operator $\cL$ & Gain sequence $\phi_K^{\cL}$ & Growth rate & Status \\
\midrule
Random search & $a_K \sim \sqrt{2\log K}$ & $O(\sqrt{\log K})$ & Exact \\
$\epsilon$-greedy & $\ge \E[a_{N+1}]$, $N\sim\mathrm{Bin}(K{-}1,\epsilon)$ & $\Omega(\sqrt{\log K})$ & Lower bound \\
Local perturbation & $c_d(\lambda)\,(K-1)$ & $O(K)$ & Exact in the idealization \\
\bottomrule
\end{tabular}
\end{table}

\section{The \GAINS{} Algorithm}
\label{sec:algorithm}

This section converts the design principle derived in Section~\ref{sec:adaptivity} into an implementable procedure. \GAINS{} (Global Adaptive Inference-time Noise Scheduling) operates at two granularities. An \emph{offline profiling} stage, executed once per model and data distribution, estimates the population sensitivity pattern and anchors a coarse allocation guided by Theorem~\ref{thm:offline}. An \emph{online control} stage then adjusts the realized expenditure on each individual instance, seeking to recover part of the Jensen gap identified in Proposition~\ref{prop:jensen} while never exceeding the overall budget.

\subsection{Offline Profiling}
\label{sec:offline}

The offline stage decides, before any instance arrives, how many local-search iterations to plan at each step. The difficulty is that, when deciding at step $t$, the scheduler knows neither the additional improvement that further search at $t$ would bring nor the improvement it forgoes by conserving budget for later steps. Offline profiling addresses both unknowns jointly by measuring how responsive the verifier score is to local search at each step. Specifically, we execute a one-time profiling pass on a small calibration set: noise search is run with a uniform iteration count at every step, and the verifier score is recorded after each iteration. From these records we compute, for each step $t$, the average per-iteration score gain and its variability across calibration instances. Motivated by the sensitivity interpretation in Theorem~\ref{thm:loc-scale}, these statistics summarize the population pattern of where search is most valuable along the trajectory. The profiled pattern is strongly model-dependent; in our profiling runs, the gains of Stable Diffusion concentrate on early steps, whereas those of the EDM sampler cluster in a contiguous middle segment. We partition the steps into high- and low-value regions, allocate a larger share of $B$ to the former, and derive a planned per-step budget $\{K_t\}$ with $\sum_t K_t = B$. The resulting schedule is tailored to the model and data distribution but fixed across instances; it encodes \emph{where along the trajectory search matters most}.

\subsection{Online Control}
\label{sec:online}

The online stage refines, at deployment time, how the planned budget is spent on each individual instance. At step $t$, local search runs up to $K_t$ iterations, but the controller may stop early or, within a small slack window, shift expenditure across steps. We distinguish the raw score $\hat s_t^{(j)}$ of the candidate produced at iteration $j$ from the running best $s_t^{(j)} := \max(s_t^{(j-1)}, \hat s_t^{(j)})$, initialized at $s_t^{(0)} := \hat s_t^{(1)}$. The controller monitors two statistics: the incremental gain $g_t^{(j)} := s_t^{(j)} - s_t^{(j-1)} \ge 0$, which vanishes exactly when iteration $j$ fails to improve the incumbent, and the empirical variance $\mathrm{Var}_{i \le j}(\hat s_t^{(i)})$ of the candidate scores observed at the current step.

The two statistics answer complementary questions, and their conjunction determines when to stop. The incremental gain asks whether the search at the current step is close to exhaustion: when recent gains approach zero, additional candidates purchase little. The empirical variance asks whether the current step is \emph{important}: by Theorem~\ref{thm:loc-scale}(ii), within-step variance tracks $\sigma_t^2$ to leading order for independent candidates and motivates its use as a practical sensitivity indicator. Stopping occurs when \emph{both} statistics are small. A small gain can mark a high-sensitivity step awaiting a fortunate draw, while a small variance can mark a low-sensitivity step whose iterations continue to nudge the running best upward; the conjunction distinguishes these regimes. Figure~\ref{fig:stopping} illustrates the decision geometry: the contour $\sigma_t \cdot \Delta a_K = \lambda^*$ represents the common marginal-threshold condition \eqref{eq:marginal-threshold}, and the rectangular two-threshold rule operationalizes this principle online. Because the running best is monotone by construction, a non-improving iteration leaves it unchanged. The budget accounting guarantees that the realized total never exceeds $B$; expenditure saved by early stopping in high-value regions is offset against later steps within the slack window, and in all of our experimental runs the realized total equaled $B$. Algorithm~\ref{alg:gains} states the procedure.

\begin{figure}[t]
\centering
\includegraphics[width=0.44\linewidth]{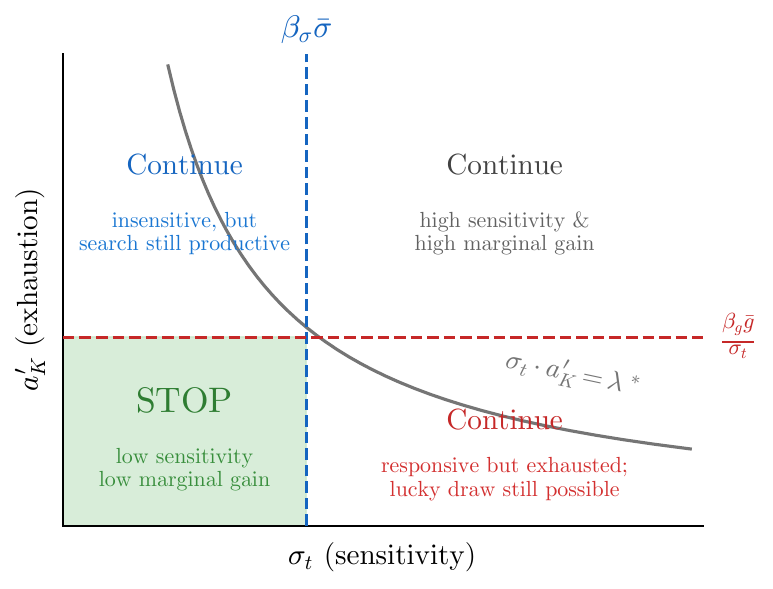}
\caption{The online stopping decision in the $(\sigma_t, \Delta a_K)$ plane. The horizontal threshold detects small recent gains; the vertical threshold detects low sensitivity through small empirical variance. Stopping is triggered only when both criteria fire (shaded region). The contour $\sigma_t\cdot\Delta a_K=\lambda^*$ depicts the common marginal threshold from Theorem~\ref{thm:offline}; the rectangular rule operationalizes the corresponding two-threshold decision in \GAINS{}.}
\label{fig:stopping}
\end{figure}

\begin{algorithm}[t]
\caption{\GAINS{}: Global Adaptive Inference-time Noise Scheduling}
\label{alg:gains}
\begin{algorithmic}[1]
\Require Steps $\{t_T,\dots,t_1\}$; total budget $B$; offline allocation $\{K_t\}$ with $\sum_t K_t = B$; slack $\delta$; gain window $W_g$; threshold coefficients $(\beta_g,\beta_\sigma)$; black-box routine $\textsc{LocalIter}(t)$ returning one raw candidate score $\hat s$ and the within-step empirical standard deviation $\sigma$.
\Ensure Realized per-step iteration counts $\{\widehat K_t\}$.
\State $R \gets B$;\quad $\cH_g \gets \emptyset$;\quad $\cH_\sigma \gets \emptyset$ \Comment{remaining budget; cross-step histories}
\For{$t = T, T-1, \dots, 1$}
    \State $K \gets \min(K_t, R)$;\quad $K_{\max}\gets \min(K+\delta,\, R)$;\quad $j_{\mathrm{watch}}\gets \max(2,\, K-\delta)$;\quad $\cH\gets\emptyset$
    \State $(\hat s^{(1)}, \sigma^{(1)}) \gets \textsc{LocalIter}(t)$;\quad $s^{(1)} \gets \hat s^{(1)}$;\quad $\widehat K_t \gets 1$;\quad $R \gets R-1$
    \For{$j = 2,\dots,K_{\max}$}
        \State $(\hat s^{(j)}, \sigma^{(j)}) \gets \textsc{LocalIter}(t)$
        \State $s^{(j)} \gets \max(s^{(j-1)}, \hat s^{(j)})$ \Comment{running best; a non-improving iteration leaves it unchanged}
        \State push $g^{(j)}\gets s^{(j)}-s^{(j-1)}$ into $\cH$ (retain last $W_g$);\quad $\widetilde g^{(j)}\gets\mathrm{Mean}(\cH)$;\quad $\widehat K_t\gets\widehat K_t+1$;\quad $R \gets R-1$
        \If{$R=0$} \textbf{break} \EndIf
        \If{$j \ge j_{\mathrm{watch}}$ \textbf{and} $|\cH_g|>0$ \textbf{and} $|\cH_\sigma|>0$ \textbf{and} $\widetilde g^{(j)} < \beta_g\,\mathrm{Mean}(\cH_g)$ \textbf{and} $\sigma^{(j)} < \beta_\sigma\,\mathrm{Mean}(\cH_\sigma)$}
            \State \textbf{break} \Comment{two-threshold stopping rule}
        \EndIf
    \EndFor
    \State append $\mathrm{Mean}(\cH)$ to $\cH_g$ if $\cH\neq\emptyset$;\quad append $\sigma^{(\widehat K_t)}$ to $\cH_\sigma$
\EndFor
\State \Return $\{\widehat K_t\}$
\end{algorithmic}
\end{algorithm}

Taken together, the two stages translate the structural results into an implementable design. Offline profiling operationalizes the fixed-profile allocation principle of Theorem~\ref{thm:offline}, while online control implements a lightweight version of its marginal-threshold logic. The lower bound of Theorem~\ref{thm:lower} motivates anchoring the schedule offline, and Proposition~\ref{prop:jensen} identifies the instance-specific value that online adjustment can recover. The numerical experiments of the next section evaluate the resulting procedure.

\section{Numerical Experiments}
\label{sec:experiments}

This section evaluates \GAINS{} numerically. The experiments are organized around the central prediction of the theory: under a fixed local-search budget, reallocating budget toward more sensitive steps should improve upon the uniform benchmark when sensitivity heterogeneity is material, and the improvement should widen as the sensitivity profile becomes more dispersed, locally as in Theorem~\ref{thm:offline}(iv) and, at fixed total sensitivity, globally in the majorization sense of Proposition~\ref{prop:majorization}. We examine this prediction across model families (Section~\ref{sec:exp-scaling}), decompose the contribution of the offline and online stages (Section~\ref{sec:exp-ablation}), assess out-of-sample generalization of the offline schedule (Section~\ref{sec:exp-prompt}), verify robustness to the choice of local operator (Section~\ref{sec:exp-operator}), and study a low-dispersion regime in which the theory anticipates smaller gains from reallocation (Section~\ref{sec:exp-flow}).

\subsection{Experimental Setup}
\label{sec:exp-setup}

We evaluate \GAINS{} on three families of samplers: Stable Diffusion \citep[SD;][]{rombach2022high} for text-conditional generation, EDM \citep{karras2022elucidating} for class-conditional generation, and a flow-based sampler. Sample quality is measured by two lightweight verifiers, \textbf{Brightness} (average luminance) and \textbf{Compressibility} (compressed file size), both deterministic functions of the generated image. The benchmark policy, denoted \emph{Uniform}, runs the same $\epsilon$-greedy local search \citep{ramesh2025nts}, which alternates random exploration with local perturbations of the incumbent, under an even allocation across steps; \GAINS{} replaces the even allocation with offline profiling and online control, redistributing saved evaluations so that the realized total matches the budget $B$ exactly in all reported runs. Both policies therefore expend identical budgets, and all comparisons are at matched cost. We use Uniform as the benchmark because it is the standard fixed-profile policy in this literature, and report budget savings at matched quality as the primary measure of economic efficiency. All runs use a single NVIDIA A100 graphics processor; wall-clock time is dominated by forward passes through the denoising network and consequently scales with the budget.

Hyperparameters are as follows. For EDM ($T=18$): exploration probability $\epsilon = 0.4$, perturbation radius $\lambda = 0.15$, candidates per search iteration $N = 4$, threshold coefficients $\beta_g = 0.3$ and $\beta_\sigma = 0.7$, slack $\delta = 2$. For SD ($T=50$): $N = 4$, $\beta_g = 0.1$, $\beta_\sigma = 0.8$. All runs use the running-best rule with gain window $W_g = 4$ and planned allocations $\{K_t\}$ from offline profiling. Each configuration is averaged over 20 independent replications, and reported uncertainties are standard deviations across replications; the generalization study uses 20 prompts with 10 replications each.

In this section, the reported budget $B$ is the total number of local-search iterations performed across all denoising steps. Each iteration evaluates $N$ candidates, so the corresponding network cost is $N B$ function evaluations (NFE); for the reported settings, $N=4$.

\subsection{Budget Sensitivity Across Model Families}
\label{sec:exp-scaling}

\begin{table}[t]
\centering
\caption{Verifier scores under matched local-search budgets. Each entry is the mean over 20 replications, plus or minus one standard deviation. Boldface marks the better policy at each budget.}
\label{tab:budgets}
\small
\begin{tabular}{llccc}
\toprule
& & \multicolumn{3}{c}{\textbf{Stable Diffusion} (budget $B$)} \\
\cmidrule(lr){3-5}
\textbf{Verifier} & \textbf{Policy} & 400 & 500 & 800 \\
\midrule
\multirow{2}{*}{Brightness} & Uniform & $0.7025\pm0.047$ & $0.7094\pm0.063$ & $0.7511\pm0.062$ \\
 & \GAINS{} & $\mathbf{0.7248\pm0.056}$ & $\mathbf{0.7346\pm0.060}$ & $\mathbf{0.7832\pm0.046}$ \\
\midrule
\multirow{2}{*}{Compressibility} & Uniform & $0.8833\pm0.017$ & $0.8825\pm0.017$ & $0.8892\pm0.020$ \\
 & \GAINS{} & $\mathbf{0.8946\pm0.016}$ & $\mathbf{0.9004\pm0.015}$ & $\mathbf{0.9008\pm0.012}$ \\
\midrule
& & \multicolumn{3}{c}{\textbf{EDM} (budget $B$)} \\
\cmidrule(lr){3-5}
\textbf{Verifier} & \textbf{Policy} & 144 & 180 & 288 \\
\midrule
\multirow{2}{*}{Brightness} & Uniform & $0.9507\pm0.015$ & $0.9617\pm0.016$ & $0.9787\pm0.018$ \\
 & \GAINS{} & $\mathbf{0.9887\pm0.005}$ & $\mathbf{0.9904\pm0.004}$ & $\mathbf{0.9988\pm0.001}$ \\
\midrule
\multirow{2}{*}{Compressibility} & Uniform & $0.6714\pm0.017$ & $0.6774\pm0.012$ & $0.6901\pm0.012$ \\
 & \GAINS{} & $\mathbf{0.6845\pm0.007}$ & $\mathbf{0.7003\pm0.009}$ & $\mathbf{0.7165\pm0.009}$ \\
\bottomrule
\end{tabular}
\end{table}

Table~\ref{tab:budgets} reports verifier scores across budgets for SD and EDM. Two observations stand out. First, \GAINS{} improves upon the uniform benchmark at every budget, for every verifier, and for both model families; on SD the Brightness margin additionally widens with the budget, from $+0.0223$ at $B = 400$ to $+0.0321$ at $B = 800$. Second, and of primary economic interest, the improvements translate into substantial budget savings at matched quality. On SD, \GAINS{} at $B=400$ exceeds the Uniform policy at $B=500$ on Brightness, a saving of 20 percent, and matches the Uniform policy at $B=800$ on Compressibility, a saving of 50 percent. On EDM, \GAINS{} at $B=144$ exceeds Uniform at $B=288$ on Brightness (a 50 percent saving), and \GAINS{} at $B=180$ exceeds Uniform at $B=288$ on Compressibility (a 37.5 percent saving). The larger margins observed on EDM are consistent with the dispersion comparative statics of Proposition~\ref{prop:majorization}: the profiled EDM sensitivities concentrate on a middle segment of the trajectory and are more dispersed than the early-spread profile of SD. We also observe that replication-to-replication variability frequently tightens under \GAINS{}, most markedly on EDM, for example $\pm 0.005$ against $\pm 0.015$ at $B = 144$, indicating that on such configurations principled reallocation not only raises the mean but also stabilizes the outcome distribution.

\subsection{Decomposition of the Offline and Online Stages}
\label{sec:exp-ablation}

\begin{table}[t]
\centering
\caption{Decomposition of the two stages on Stable Diffusion at fixed budget $B = 400$. \emph{Offline only} applies the profiled allocation without online control.}
\label{tab:ablation}
\small
\begin{tabular}{lccc}
\toprule
\textbf{Verifier} & \textbf{Uniform} & \textbf{Offline only} & \textbf{\GAINS{} (offline + online)} \\
\midrule
Brightness & $0.7025\pm0.047$ & $0.7158\pm0.065$ & $\mathbf{0.7248\pm0.056}$ \\
Compressibility & $0.8833\pm0.017$ & $0.8873\pm0.018$ & $\mathbf{0.8946\pm0.016}$ \\
\bottomrule
\end{tabular}
\end{table}

Table~\ref{tab:ablation} isolates the contribution of each stage at a fixed local-search budget of $B=400$ iterations on SD. Replacing the uniform allocation with the offline schedule alone lifts Brightness by $+0.0133$ and Compressibility by $+0.0040$, showing that coarse population-level budgeting already captures useful structure, consistent with the instance-independence property of Theorem~\ref{thm:offline}(i) when sensitivities are approximately common across instances. Layering the online controller on top yields a further $+0.0090$ and $+0.0073$, respectively. This second increment is consistent with the Jensen-gap logic of Proposition~\ref{prop:jensen}: because instance-level sensitivity profiles vary around the population mean, an allocation tuned to the mean leaves recoverable value, and the online controller appears to recover a portion of it. The decomposition thus matches the two-part design principle of Section~\ref{sec:adaptivity} component by component.

\subsection{Out-of-Sample Generalization of the Offline Schedule}
\label{sec:exp-prompt}

\begin{table}[t]
\centering
\caption{Generalization to a broader prompt distribution: 20 prompts, 10 replications each, on Stable Diffusion.}
\label{tab:prompts}
\small
\begin{tabular}{lcc}
\toprule
\textbf{Policy} & \textbf{Brightness} & \textbf{Compressibility} \\
\midrule
Uniform & $0.6949\pm0.076$ & $0.7938\pm0.089$ \\
\GAINS{} & $\mathbf{0.7213\pm0.075}$ & $\mathbf{0.8076\pm0.096}$ \\
\bottomrule
\end{tabular}
\end{table}

We evaluate the offline schedule on 20 prompts with 10 independent replications each to measure transfer beyond the calibration set. Table~\ref{tab:prompts} shows that \GAINS{} raises mean Brightness from $0.6949$ to $0.7213$ and mean Compressibility from $0.7938$ to $0.8076$, while dispersion across replications remains comparable between the two policies. The gains persist across this diverse prompt set, indicating that the profiled schedule captures structure at the model level in line with the instance-independence property of Theorem~\ref{thm:offline}(i).

\subsection{Robustness to the Local Search Operator}
\label{sec:exp-operator}

\begin{table}[t]
\centering
\caption{Global scheduling combined with two qualitatively different local operators on Stable Diffusion. \GAINS{} improves upon the uniform benchmark under both operators.}
\label{tab:operators}
\small
\begin{tabular}{lcccc}
\toprule
& \multicolumn{2}{c}{\textbf{Uniform}} & \multicolumn{2}{c}{\textbf{\GAINS{}}} \\
\cmidrule(lr){2-3} \cmidrule(lr){4-5}
\textbf{Verifier} & Zero-order & Random & Zero-order & Random \\
\midrule
Brightness & $0.4920\pm0.072$ & $0.7382\pm0.039$ & $\mathbf{0.5080\pm0.075}$ & $\mathbf{0.7457\pm0.045}$ \\
Compressibility & $0.5727\pm0.066$ & $0.8417\pm0.027$ & $\mathbf{0.5832\pm0.061}$ & $\mathbf{0.8454\pm0.025}$ \\
\bottomrule
\end{tabular}
\end{table}

We test whether global scheduling remains effective when the within-step search rule changes. Corollary~\ref{cor:general-alloc} shows that the same allocation structure extends across the admissible local operators it covers, with the optimal schedule determined jointly by the sensitivity profile and the operator-specific gain sequence. Table~\ref{tab:operators} pairs both policies with zero-order perturbation search, whose idealization is the local-perturbation operator of Appendix~\ref{app:general}, and random search. The operators differ in absolute performance because their gain sequences differ (Table~\ref{tab:phi-summary}): random search has the logarithmic sequence $a_K$, whereas pure perturbation search has an affine sequence that depends on the perturbation radius. Across both operators, \GAINS{} improves upon Uniform on both verifiers, by $+0.0160$ and $+0.0105$ under zero-order search and by $+0.0075$ and $+0.0037$ under random search, in line with the sensitivity-gain separation of Theorem~\ref{thm:general-gain}.

\subsection{A Low-Dispersion Regime: Flow-Based Samplers}
\label{sec:exp-flow}

\begin{table}[t]
\centering
\caption{Flow-based sampler with stochasticity injected at selected steps. Margins are smaller than in Table~\ref{tab:budgets}, consistent with the narrower sensitivity dispersion among the eligible stochastic steps.}
\label{tab:flow}
\small
\begin{tabular}{llccc}
\toprule
& & \multicolumn{3}{c}{Budget $B$} \\
\cmidrule(lr){3-5}
\textbf{Verifier} & \textbf{Policy} & 80 & 100 & 150 \\
\midrule
\multirow{2}{*}{Brightness} & Uniform & $0.5492\pm0.003$ & $0.5499\pm0.003$ & $0.5515\pm0.004$ \\
 & \GAINS{} & $\mathbf{0.5507\pm0.003}$ & $\mathbf{0.5519\pm0.003}$ & $\mathbf{0.5526\pm0.003}$ \\
\midrule
\multirow{2}{*}{Compressibility} & Uniform & $0.8135\pm0.005$ & $0.8156\pm0.004$ & $0.8157\pm0.006$ \\
 & \GAINS{} & $\mathbf{0.8162\pm0.003}$ & $\mathbf{0.8170\pm0.006}$ & $\mathbf{0.8193\pm0.005}$ \\
\bottomrule
\end{tabular}
\end{table}

Flow-based samplers advance the state deterministically. Following \citet{singh2024stochastic} and \citet{kim2025rbf}, we inject Gaussian stochasticity at a small set of selected steps, converting the deterministic solver into a stochastic one with matching marginal distributions, after which the formulation of Section~\ref{sec:framework} applies without modifying the pretrained network. Because only a few steps are eligible for search and their profiled sensitivities are comparatively homogeneous, this setting isolates the low-dispersion regime in Theorem~\ref{thm:offline}(iv). Table~\ref{tab:flow} shows that \GAINS{} improves upon Uniform at every budget, with \GAINS{} at $B=80$ exceeding Uniform at $B=100$ on Brightness (a 20 percent saving) and \GAINS{} at $B=100$ exceeding Uniform at $B=150$ on Compressibility (a 33 percent saving). The absolute margins range from $+0.001$ to $+0.004$, smaller than those in Table~\ref{tab:budgets}, as predicted for a low-dispersion profile. As in Section~\ref{sec:exp-scaling}, \GAINS{} also reduces run-to-run variability on several configurations, for example from $0.0058$ to $0.0046$ for Compressibility at $B = 150$.

\section{Concluding Remarks}
\label{sec:conclusion}

We study where to place inference-time computation along a diffusion denoising trajectory. We formulate this question as a computational budget allocation problem and decompose it into local search operators coupled to a global scheduler. For the scheduler we then developed a structural theory with four conclusions. First, the expected per-step gain factorizes into an endogenous sensitivity parameter and a universal sample-size factor (Theorem~\ref{thm:loc-scale}). Second, for a fixed sensitivity profile, this reduction yields a standard separable concave integer allocation problem with common marginal-threshold conditions of water-filling type (Theorem~\ref{thm:offline}); at fixed total sensitivity, greater dispersion increases the surplus over uniform allocation (Proposition~\ref{prop:majorization}). Third, no adaptive policy escapes worst-case regret linear in the trajectory length (Theorem~\ref{thm:lower} and Proposition~\ref{prop:transfer}, unconditionally within a stylized instance class by Proposition~\ref{prop:realizable}). Fourth, the value of instance-adaptive allocation on non-adversarial populations equals a nonnegative Jensen gap (Proposition~\ref{prop:jensen}). For a general family of local search operators, the operator-specific gain sequence determines the marginal-allocation structure: strictly concave gains yield decreasing-marginal water-filling conditions, whereas affine gains concentrate discretionary budget on the most sensitive steps (Theorem~\ref{thm:general-gain} and Corollary~\ref{cor:general-alloc}). The resulting algorithm attains the quality of the uniform benchmark with 20 to 50 percent fewer function evaluations across three families of samplers.

For deployment at scale, computational budgets can be anchored offline using population-level sensitivity profiles estimated from a small calibration run. Online adaptation then recovers instance-specific Jensen-gap value while retaining the offline worst-case guarantee quantified by Theorem~\ref{thm:lower}. The sensitivity parameter itself, the product of the schedule's noise injection and the verifier's gradient response, provides an interpretable diagnostic for where in a sequential generation pipeline additional computation purchases the most quality.

The analysis has two limitations. For score-dependent local operators, the guarantees describe projected-score comparisons; extending them to exact accept-or-reject decisions requires quantifying the effect of the comparison window in Appendix~\ref{app:general}. Across stages, the analysis linearizes how search effort changes later opportunities. Future work can model this dependence directly and study how to allocate computation among verifiers with different costs and when to revisit earlier steps (Appendix~\ref{app:mdp}).


\appendix

\section{Proofs of the Results in Sections \ref{sec:loc-scale} to \ref{sec:adaptivity}}
\label{app:proofs}

This appendix proves the results of Sections~\ref{sec:loc-scale}--\ref{sec:adaptivity}, in the order in which they are stated. We begin with an auxiliary expansion of the verifier score, the source of the linear-in-noise leading term that drives the location-scale structure.

\begin{lemma}\label{lem:taylor} (Score Taylor expansion.)
Suppose Assumption~\ref{asm:smooth} holds. Then, conditional on $(X_t, c)$,
\begin{equation}
  S_t \;=\; \Psi_t(\bar{X}_{t-h};\, c)
        \;+\; g_t\sqrt{h}\,\bigl\langle \nabla\Psi_t(\bar{X}_{t-h};\, c),\, \xi \bigr\rangle
        \;+\; R_t,
  \label{eq:taylor-score}
\end{equation}
where the remainder satisfies $|R_t| \le \tfrac{1}{2}\,L_t\,g_t^2\,h\,\|\xi\|^2$ almost surely, and hence $\E[\,|R_t|\mid X_t, c\,] \le \tfrac{1}{2}\,L_t\,g_t^2\,h\,d$.
\end{lemma}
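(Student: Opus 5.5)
The plan is a second-order Taylor expansion with integral remainder, applied to the induced score map along the segment from the noise-free Euler point to the realized one-step outcome. Fix $(X_t,c)$. Then $\bar X_{t-h}=X_t+b_t(X_t,c)h$ is deterministic, and by \eqref{eq:em-step} the outcome is $X_{t-h}=\bar X_{t-h}+\delta$ with $\delta:=g_t\sqrt h\,\xi$. So $S_t=\Psi_t(\bar X_{t-h}+\delta;c)$. All randomness then sits in $\xi$, and conditioning on $(X_t,c)$ only freezes the base point and the gradient.

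First, define $\varphi(s):=\Psi_t(\bar X_{t-h}+s\delta;c)$ for $s\in[0,1]$. By Assumption~\ref{asm:smooth}, $\varphi\in C^2([0,1])$ with
\[
\varphi'(s)=\langle\nabla\Psi_t(\bar X_{t-h}+s\delta;c),\delta\rangle,
\qquad
\varphi''(s)=\delta^\top\nabla^2\Psi_t(\bar X_{t-h}+s\delta;c)\,\delta .
\]
Taylor's theorem with integral remainder gives
\[
\varphi(1)=\varphi(0)+\varphi'(0)+\int_0^1(1-s)\varphi''(s)\,ds .
\]
Identify $\varphi(0)=\Psi_t(\bar X_{t-h};c)$ and $\varphi'(0)=g_t\sqrt h\,\langle\nabla\Psi_t(\bar X_{t-h};c),\xi\rangle$. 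Then set $R_t:=\int_0^1(1-s)\varphi''(s)\,ds$. This yields \eqref{eq:taylor-score} exactly, pathwise.

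Second, bound the remainder. The global operator-norm bound gives $|\varphi''(s)|\le L_t\|\delta\|^2=L_t g_t^2 h\|\xi\|^2$ for every $s$. Since $\int_0^1(1-s)\,ds=\tfrac12$, we get $|R_t|\le\tfrac12 L_t g_t^2 h\|\xi\|^2$ almost surely. Because the Hessian bound is global, no localization or truncation of $\xi$ is needed.

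Finally, take conditional expectations. Given $(X_t,c)$, the noise $\xi\sim\mathcal N(0,I_d)$ is independent of the conditioning, so $\E[\|\xi\|^2\mid X_t,c]=d$. This gives $\E[|R_t|\mid X_t,c]\le\tfrac12 L_t g_t^2 h\,d$.

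There is no real obstacle here. The one point needing care is measurability and the conditioning: the gradient must be evaluated at the $(X_t,c)$-measurable point $\bar X_{t-h}$, not at $X_{t-h}$. This is what makes the linear term a fixed vector paired with fresh Gaussian noise, which is the form Theorem~\ref{thm:loc-scale}(i) later exploits.
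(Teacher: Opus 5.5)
Your proposal is correct and follows the same route as the paper's proof: a second-order Taylor expansion of $\Psi_t(\cdot\,;c)$ about the $(X_t,c)$-measurable point $\bar X_{t-h}$, the global Hessian bound giving $|R_t|\le\tfrac12 L_t g_t^2 h\|\xi\|^2$, and then $\E[\|\xi\|^2\mid X_t,c]=d$. The only difference is cosmetic. You use the integral form of the remainder, whereas the paper uses the mean-value form with an intermediate point $\widetilde X$ on the segment; both give the same constant $\tfrac12$, and your version avoids any question about whether that intermediate point is measurable.
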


\proof{Proof of Lemma~\ref{lem:taylor}.}
By \eqref{eq:em-step}, $X_{t-h} = \bar X_{t-h} + g_t\sqrt h\,\xi$. A second-order Taylor expansion of $\Psi_t(\cdot\,;c)\in C^2$ about $\bar X_{t-h}$ gives, for some $\widetilde X$ on the segment $[\bar X_{t-h}, X_{t-h}]$,
\[
  S_t = \Psi_t(\bar X_{t-h};c) + g_t\sqrt h\,\bigl\langle \nabla\Psi_t(\bar X_{t-h};c),\, \xi\bigr\rangle + R_t,
  \qquad
  R_t = \tfrac12 g_t^2 h\,\xi^\top \nabla^2\Psi_t(\widetilde X;c)\,\xi,
\]
which is \eqref{eq:taylor-score}. By Assumption~\ref{asm:smooth},
\[
  |R_t| \le \tfrac12\,\|\nabla^2\Psi_t(\widetilde X;c)\|_{\mathrm{op}}\,g_t^2 h\,\|\xi\|^2 \le \tfrac12 L_t g_t^2 h\,\|\xi\|^2,
\]
and $\E[\cdot\mid X_t,c]$ with $\E[\|\xi\|^2\mid X_t,c]=d$ gives $\E[|R_t|\mid X_t,c]\le\tfrac12 L_t g_t^2 h\,d$. \Halmos
\endproof

\proof{Proof of Theorem~\ref{thm:loc-scale}.}
\emph{Part (i).} Suppose first that $\|\nabla\Psi_t\|>0$, and set
\[
  u_t:=\frac{\nabla\Psi_t}{\|\nabla\Psi_t\|},
  \qquad
  Z_t:=\langle u_t,\xi\rangle.
\]
Conditional on $(X_t,c)$, the unit vector $u_t$ is deterministic and $\xi\sim\mathcal N(0,I_d)$, so $Z_t\sim\mathcal N(0,1)$ and $\langle\nabla\Psi_t,\xi\rangle=\|\nabla\Psi_t\|Z_t$. Lemma~\ref{lem:taylor} therefore gives $S_t=\mu_t+\sigma_tZ_t+R_t$, with $\sigma_t=g_t\sqrt h\,\|\nabla\Psi_t\|$. If $\nabla\Psi_t=0$, then $\sigma_t=0$ and the linear term vanishes, so the representation holds with any standard normal variable $Z_t$.

\emph{Part (ii).} Since $\xi\sim\mathcal N(0,I_d)$ gives $\E\|\xi\|^4=d(d+2)$, Lemma~\ref{lem:taylor} yields $\E[R_t^2\mid X_t,c]\le\tfrac14 L_t^2 g_t^4 h^2 d(d+2)$. Hence, by Cauchy--Schwarz,
\begin{align}
  \operatorname{Var}(S_t\mid X_t,c) &= \sigma_t^2 + 2\operatorname{Cov}(\sigma_t Z_t,R_t) + \operatorname{Var}(R_t), \nonumber\\
  |\operatorname{Cov}(\sigma_t Z_t,R_t)| &\le \sigma_t\sqrt{\E[R_t^2]} \le \tfrac12 L_t\sqrt{d(d+2)}\,\sigma_t g_t^2 h = O_{d,L_t,\|\nabla\Psi_t\|}(g_t^3 h^{3/2}), \nonumber\\
  \operatorname{Var}(R_t) &\le \E[R_t^2] = O_{d,L_t}(g_t^4 h^2). \nonumber
\end{align}
The last term is of lower order as $g_t\sqrt h\to0$, so $\operatorname{Var}(S_t\mid X_t,c)=\sigma_t^2+O_{d,L_t,\|\nabla\Psi_t\|}(g_t^3 h^{3/2})$.

\emph{Part (iii).} By Part (i), $S_t^{(k)}=\mu_t+\sigma_t Z_t^{(k)}+R_t^{(k)}$ with $Z_t^{(k)}\overset{\mathrm{iid}}{\sim}\mathcal N(0,1)$. Using $|\max_k(a_k+b_k)-\max_k a_k|\le\max_k|b_k|$ and $\sigma_t\ge0$,
\[
  \Bigl|\max_k S_t^{(k)} - \bigl(\mu_t+\sigma_t\max_k Z_t^{(k)}\bigr)\Bigr| \le \max_k|R_t^{(k)}| \le \sum_k |R_t^{(k)}|,
  \qquad
  \E\Bigl[\textstyle\sum_k |R_t^{(k)}|\,\Big|\,X_t,c\Bigr] \le \tfrac12 K L_t g_t^2 h\,d.
\]
Taking $\E[\cdot\mid X_t,c]$ and subtracting $\mu_t$,
\[
  G_t(K) = \sigma_t\,\E\bigl[\max_k Z_t^{(k)}\bigr] + O_{K,d,L_t}(g_t^2 h) = \sigma_t a_K + O_{K,d,L_t}(g_t^2 h),
\]
the implicit constant depending on $K,d,L_t$ but not on $g_t$ or $\nabla\Psi_t$. \Halmos
\endproof

The factorization aggregates across the trajectory by summing the stepwise gains, which we record for reference.

\begin{corollary}\label{cor:aggregate} (Aggregation along the trajectory.)
Under Assumption~\ref{asm:smooth}, for any fixed allocation $\{K_t\}_{t=1}^T$, conditional on the trajectory $\{X_t\}_{t=1}^T$,
\begin{equation}
  \sum_{t=1}^{T} G_t(K_t) = \sum_{t=1}^{T} g_t\sqrt{h}\,\|\nabla\Psi_t(\bar{X}_{t-h}; c)\|\,a_{K_t} + \sum_{t=1}^{T} O_{K_t,d,L_t}(g_t^2 h).
  \label{eq:aggregate-gain}
\end{equation}
Dropping the second-order remainder reduces the offline allocation problem to the concave resource allocation problem \eqref{eq:alloc} with per-step weights $\sigma_t = g_t\sqrt{h}\,\|\nabla\Psi_t\|$; Remark~\ref{rmk:regime} states the regime in which the omission is justified.
\end{corollary}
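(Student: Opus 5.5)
The statement to prove is Corollary~\ref{cor:aggregate}. The plan is to sum the per-step expansion of Theorem~\ref{thm:loc-scale}(iii) over $t$. First, fix the allocation $\{K_t\}$. Condition on the realized trajectory $\{X_t\}_{t=1}^T$, so that at each step $t$ the state $X_t$, the noise-free Euler point $\bar X_{t-h}$, and the gradient $\nabla\Psi_t(\bar X_{t-h};c)$ are all deterministic.

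At each step, the conditional gain $G_t(K_t)$ depends on the past only through $(X_t,c)$. This is because the $K_t$ candidates at step $t$ are fresh independent Gaussian draws. Hence Theorem~\ref{thm:loc-scale}(iii) applies step by step and gives
\[
G_t(K_t)=\sigma_t a_{K_t}+r_t,
\qquad |r_t|\le \tfrac12 K_t L_t d\, g_t^2 h,
\]
where $\sigma_t=g_t\sqrt h\,\|\nabla\Psi_t(\bar X_{t-h};c)\|$. The explicit bound on $r_t$ is taken from the proof of part~(iii). Its constant depends only on $K_t$, $d$ and $L_t$, and not on the trajectory, which is what allows us to write it as $O_{K_t,d,L_t}(g_t^2h)$ uniformly in the conditioning.

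Summing over $t=1,\dots,T$ then yields \eqref{eq:aggregate-gain} directly. A finite sum of terms, each bounded by its own $O_{K_t,d,L_t}(g_t^2h)$, is recorded termwise, exactly as the display writes it.

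For the reduction statement, we drop the remainder sum $\sum_t r_t$. The remaining objective is $\sum_t \sigma_t a_{K_t}$, to be maximized over $\sum_t K_t=B$ with $K_t\ge1$, which is precisely \eqref{eq:alloc}. Concavity of the objective follows from the concavity of $a_K$ cited after Theorem~\ref{thm:loc-scale}. The question of when the omission is justified is deferred to Remark~\ref{rmk:regime}: the aggregate remainder is bounded by $\tfrac12 d\max_t L_t g_t^2 h\cdot B$, and this must be compared against $\sum_t\sigma_t a_{K_t}$.

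The only delicate point is the conditioning. The trajectory $\{X_t\}$ is itself produced by the search, so one must check that conditioning on it does not bias the step-$t$ candidate draws. I would handle this by interpreting the statement as conditioning on $X_t$ at each step, that is, on the filtration up to the start of step $t$. Within step $t$, the candidates $\xi^{(k)}$ are independent of that filtration, which is exactly the setting of Theorem~\ref{thm:loc-scale}.
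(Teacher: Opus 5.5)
Your proposal is correct and matches the paper's argument, which is the one-line remark that the stepwise factorization of Theorem~\ref{thm:loc-scale}(iii) is summed over $t$ with remainder $\tfrac12 K_t L_t d\,g_t^2 h$ at each step. Your treatment of the conditioning is also fine: $G_t$ is by definition a function of $X_t$ alone, so conditioning on the trajectory only fixes the points at which each $G_t$ is evaluated.
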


\proof{Proof of Theorem~\ref{thm:offline}.}
Set \(m_{t,k}:=\sigma_t\Delta a_k\). The Gaussian order-statistic marginals satisfy \(\Delta a_k>0\) and are strictly decreasing in \(k\) \citep[Ch.~4]{david2003order}; hence each sequence \(\{m_{t,k}\}_{k\ge1}\) is nonnegative and non-increasing (strictly positive and decreasing whenever \(\sigma_t>0\)), and
\[
  \sigma_t a_{K_t}=\sigma_t a_1+\sum_{k=1}^{K_t-1}m_{t,k}.
\]
Hence moving one unit from an active step \(s\) to a distinct step \(t\ne s\) changes the objective by \(m_{t,K_t}-m_{s,K_s-1}\). This one-unit exchange proves all the integer statements.

\emph{Part (i).} If \(\sigma_t\) depends only on \(t\), both the surrogate objective and its feasible set are identical for every \((c,x_t)\), so the optimizer set is instance-independent.

\emph{Part (ii).} If \(B=T\), the only feasible allocation is \(K_t^*=1\), and any \(\lambda^*\ge\max_t m_{t,1}\) works. If \(B>T\), let \(\mathcal A:=\{s:K_s^*>1\}\). Optimality gives
\[
  m_{t,K_t^*}\le m_{s,K_s^*-1}
  \qquad(s\in\mathcal A,\ t\ne s);
\]
for \(t=s\), the same inequality follows from diminishing marginals. Thus
\[
  L^*:=\max_t m_{t,K_t^*}
  \le
  U^*:=\min_{s\in\mathcal A}m_{s,K_s^*-1}.
\]
Any \(\lambda^*\in[L^*,U^*]\) gives \eqref{eq:marginal-threshold}, and an inactive step satisfies \(m_{t,1}\le L^*\le\lambda^*\).

\emph{Part (iii).} If \(\sigma_{t_1}>\sigma_{t_2}\) but \(K_{t_1}^*<K_{t_2}^*\), transferring one unit from \(t_2\) to \(t_1\) gains
\[
  \sigma_{t_1}\Delta a_{K_{t_1}^*}
  -
  \sigma_{t_2}\Delta a_{K_{t_2}^*-1}>0,
\]
because \(K_{t_1}^*+1\le K_{t_2}^*\), a contradiction. If two active steps have \(K_{t_1}^*=K_{t_2}^*=K\ge2\), this transfer gains \(\sigma_{t_1}\Delta a_K-\sigma_{t_2}\Delta a_{K-1}\), which is positive under the stated gap condition; hence the tie is impossible. Without the gap, \((3,3)\) can be optimal for \(T=2\), \(B=6\), and \(\bsigma=(1.01,1)\); floor-pinned ties occur, for example, at \(T=3\), \(B=4\), and \(\bsigma=(10,1,0.5)\). For a relaxed optimizer \(\widetilde K^*\), if \(\widetilde K_{t_1}^*>1\) and \(\widetilde K_{t_2}^*=1\), strict ordering is immediate. If both coordinates are interior, their first-order conditions are \(\sigma_t\tilde a'(\widetilde K_t^*)=\widetilde\lambda^*\); positivity and strict decrease of \(\tilde a'\) then imply \(\widetilde K_{t_1}^*>\widetilde K_{t_2}^*\) whenever \(\sigma_{t_1}>\sigma_{t_2}\). Only two coordinates jointly pinned at the floor may remain tied.

\emph{Part (iv).} When \(T\) divides \(B\), the uniform integer allocation is feasible, so \eqref{eq:dominance} follows immediately from optimality. For the continuous claims, choose \(v_1>\Delta a_1\) and \(v_K:=\tfrac12(\Delta a_{K-1}+\Delta a_K)\) for \(K\ge2\). Since \(v_{K+1}<\Delta a_K<v_K\), set \(u=v_K\), \(\ell=v_{K+1}\), and \(m=\Delta a_K\) on \([K,K+1]\). If \(m\ge(u+\ell)/2\), take \(f_K(s)=u-(u-\ell)s^p\), choosing \(p\ge1\) so that \(m=u-(u-\ell)/(p+1)\); if \(m\le(u+\ell)/2\), take \(f_K(s)=\ell+(u-\ell)(1-s)^p\), choosing \(p\ge1\) so that \(m=\ell+(u-\ell)/(p+1)\). In either case, \(f_K\) is positive, Lipschitz, and strictly decreasing on \([0,1]\), has endpoints \(v_K,v_{K+1}\), and integrates to \(\Delta a_K\). Matching these functions at the integers and setting
\[
  \tilde a'(K+s):=f_K(s),
  \qquad
  \tilde a(x):=a_1+\int_1^x\tilde a'(u)\,du
\]
gives a \(C^1\), strictly concave interpolant with positive, locally Lipschitz derivative. For \(B>T\), let \(\bar K:=B/T\). If \(\sigma_{t_1}>\sigma_{t_2}\), the feasible perturbation \(K_{t_1}=\bar K+\epsilon\), \(K_{t_2}=\bar K-\epsilon\) changes the relaxed objective by
\[
  \epsilon(\sigma_{t_1}-\sigma_{t_2})\tilde a'(\bar K)+O(\epsilon^2)>0
\]
for all sufficiently small \(\epsilon>0\). This proves the strict and first-order assertions. \Halmos
\endproof

\proof{Proof of Proposition~\ref{prop:majorization}.}
The feasible set $\mathcal K:=\{K\in[1,\infty)^T:\sum_t K_t=B\}$ is nonempty ($B\ge T$), compact, and convex, and $\tilde a$ is continuous, so $\tilde V^*$ is attained and finite.

\emph{(i).} Feasibility of $K_t\equiv B/T$ gives $\tilde V^*(\bsigma)\ge\tilde a(B/T)\sum_t\sigma_t$, i.e. $\Delta_{\mathrm{unif}}\ge0$. For each fixed $K\in\mathcal K$, $\bsigma\mapsto\sum_t\sigma_t\tilde a(K_t)$ is linear, so $\tilde V^*$ is a supremum of linear maps, hence convex; subtracting the linear $\bsigma\mapsto\tilde a(B/T)\sum_t\sigma_t$ preserves convexity. As $\mathcal K$ is permutation-invariant, $\tilde V^*(P\bsigma)=\tilde V^*(\bsigma)$ for every permutation $P$ and $\sum_t\sigma_t$ is symmetric, so $\Delta_{\mathrm{unif}}$ is symmetric.

\emph{(ii).} If $\bsigma$ majorizes $\bsigma'$, then $\bsigma'=D\bsigma$ for a doubly stochastic $D$ (Hardy--Littlewood--P\'olya), and $D=\sum_i\lambda_i P_i$ with permutations $P_i$ and $\lambda_i\ge0$, $\sum_i\lambda_i=1$ (Birkhoff) \citep{marshall2011inequalities}. Convexity and symmetry give
\[
  \Delta_{\mathrm{unif}}(\bsigma')=\Delta_{\mathrm{unif}}\Bigl(\sum\nolimits_i\lambda_i P_i\bsigma\Bigr)\le\sum\nolimits_i\lambda_i\,\Delta_{\mathrm{unif}}(P_i\bsigma)=\Delta_{\mathrm{unif}}(\bsigma).
\]

\emph{(iii).} Let $B>T$, so $B/T>1$ is interior. If all $\sigma_t=s\ge0$, strict concavity of $\tilde a$ and Jensen give $\sum_t\tilde a(K_t)\le T\tilde a(B/T)$ on $\mathcal K$, with equality at the uniform point; hence $\tilde V^*(\bsigma)=sT\tilde a(B/T)$ and $\Delta_{\mathrm{unif}}(\bsigma)=0$. Conversely, if the $\sigma_t$ are not all equal, the perturbation in Part (iv) of the proof of Theorem~\ref{thm:offline} shows the uniform point is not a maximizer, so $\tilde V^*(\bsigma)>\tilde a(B/T)\sum_t\sigma_t$ and $\Delta_{\mathrm{unif}}(\bsigma)>0$. \Halmos
\endproof

\proof{Proof of Theorem~\ref{thm:lower}.}
Fix any possibly randomized adaptive policy and let $T=2B$. Both instances set $\cD_t=\delta_{1/2}$ for $t\le B$. Over the second half, instance A sets $\cD_t=\delta_0$, whereas instance B sets $\cD_t=\delta_1$. The two instances therefore induce the same joint law for the history observed through round $B$. Under the fixed policy, the distribution of the first-half expenditure is consequently the same on both instances; denote its common expectation by
\[
  S:=\E\Bigl[\sum_{t=1}^B K_t\Bigr]\in[0,B].
\]
On instance A, the offline benchmark spends all $B$ units in the first half and earns $B/2$, whereas the policy earns $S/2$ because all second-half rewards vanish. On instance B, the offline benchmark spends all $B$ units in the second half and earns $B$, whereas the policy earns at most $S/2+(B-S)$. Thus, writing $\mathrm{Reg}_A$ and $\mathrm{Reg}_B$ for the expected regrets on the two instances,
\[
  \mathrm{Reg}_A\ge\frac{B-S}{2},
  \qquad
  \mathrm{Reg}_B\ge\frac{S}{2}.
\]
It follows that
\[
  \max\{\mathrm{Reg}_A,\mathrm{Reg}_B\}
  \ge
  \max\Bigl\{\tfrac{B-S}{2},\tfrac{S}{2}\Bigr\}
  \ge \frac{B}{4},
\]
where the final expression is minimized at $S=B/2$. Hence at least one of the two instances has expected regret at least $B/4$; since $T=2B$, this is $\Omega(T)$. \Halmos
\endproof

\proof{Proof of Proposition~\ref{prop:transfer}.}
The two profiles coincide through step $B$ and therefore induce the same joint law for the history observed in the first half of the surrogate allocation game. Hence any possibly randomized nonanticipative policy has the same distribution of first-half expenditure on the two profiles; write
\[
  S:=\E\Bigl[\sum_{t=1}^B e_t\Bigr]\in[0,B]
\]
for its common expectation. Concavity of $a_K$ gives
\[
  a_{1+e}-a_1\le e\,\Delta a_1,\qquad e\in\mathbb Z_{\ge0}.
\]
On profile A, the offline optimum assigns one discretionary unit to each of the first $B$ steps and earns $m\Delta a_1B$, whereas the policy earns at most $m\Delta a_1S$. On profile B, the offline optimum assigns one unit to each of the last $B$ steps and earns $2m\Delta a_1B$, whereas the policy earns at most $m\Delta a_1S+2m\Delta a_1(B-S)$. Therefore,
\[
  \mathrm{Reg}_A \ge m\Delta a_1(B-S),
  \qquad
  \mathrm{Reg}_B \ge 2m\Delta a_1 B - m\Delta a_1 S - 2m\Delta a_1(B-S) = m\Delta a_1 S,
\]
and hence
\[
  \max\{\mathrm{Reg}_A,\mathrm{Reg}_B\}
  \ge m\Delta a_1\max\{B-S,S\}
  \ge \frac{m\Delta a_1B}{2}.
\]

For the diffusion transfer, set $g_{\max}:=\max_{1\le t\le2B}g_t$. The $2B$ mandatory candidates and $B$ discretionary candidates give $\sum_tK_t=3B$. Under the uniform approximation bound in Proposition~\ref{prop:transfer}, the deviation of any feasible allocation from its surrogate value is bounded by
\[
  C\sum_{t=1}^{2B}K_tg_t^2h
  \le 3CBg_{\max}^2h.
\]
Applying this bound once to the policy value and once to the corresponding offline benchmark shows that realized regret differs from surrogate regret by at most $6CBg_{\max}^2h$. This is the uniform-in-$t$ bound stated as $O(Bg_{\max}^2h)$ in Proposition~\ref{prop:transfer} and proves the transfer. \Halmos
\endproof

\medskip
\noindent The next construction realizes the approximation bound in Proposition~\ref{prop:transfer} within the abstract model of Sections~\ref{sec:formulation} and~\ref{sec:framework}.

\begin{proposition}\label{prop:realizable} (Exact realizability of the hard instances.)
Fix $m > 0$, $B \ge 1$, $T = 2B$, and a step size $h > 0$. Within the model of Sections~\ref{sec:formulation} and~\ref{sec:framework}, consider the instance class with state space $\R^d$, driftless transitions ($b_t \equiv 0$ in \eqref{eq:em-step}), the identity denoiser $D_\theta(x, s, c) = x$, the linear verifier $v(x, c) = \langle w, x\rangle$ for a fixed unit vector $w$, and the diffusion schedule $g_t := \sigma_t/\sqrt h$, where the profile $\{\sigma_t\}$, indexed in traversal order, equals $\bsigma^A$ or $\bsigma^B$ from Proposition~\ref{prop:transfer}. Then the following statements hold.
\begin{enumerate}
\item[(i)] Assumption~\ref{asm:smooth} holds with $L_t = 0$, and for every $t$ and $K$ the per-step gain satisfies $G_t(K) = \sigma_t\,a_K$ exactly, with zero remainder.
\item[(ii)] For every deterministic allocation $\{K_t\}$, the exact end-to-end objective satisfies $\E[v(x_0, c)] = \E[\langle w, x_T\rangle] + \sum_t \sigma_t\,a_{K_t}$; for policies that select the count at each step adaptively, but \emph{before observing any candidate at that step}, the identity holds with $a_{K_t}$ replaced by $\E[a_{K_t}]$. In particular, the surrogate objective coincides with the exact objective on this class.
\item[(iii)] The two instances induce identical joint laws for all observables over the first $B$ traversed steps. Consequently, every, possibly randomized, policy that, upon reaching each step, commits to the discretionary count before observing that step's candidate scores, even if granted direct observation of $\sigma_t$ on arrival, suffers worst-case \emph{exact} regret at least $m\,\Delta a_1\,B/2$ across the two instances, and the transfer of Theorem~\ref{thm:lower} is unconditional within this class.
\end{enumerate}
\end{proposition}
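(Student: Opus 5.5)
The plan is to compute everything explicitly, since in this class every step is linear and Gaussian. For (i), we have $\Psi_t(x;c)=\langle w,D_\theta(x,t-h,c)\rangle=\langle w,x\rangle$. This map is linear, so $\nabla^2\Psi_t\equiv0$ and Assumption~\ref{asm:smooth} holds with $L_t=0$. Since $b_t\equiv0$, the noise-free point is $\bar X_{t-h}=X_t$, and $\nabla\Psi_t=w$ with $\|w\|=1$. Hence $\sigma_t$ as defined in Theorem~\ref{thm:loc-scale} equals $g_t\sqrt h=\sigma_t$, which is consistent with the chosen schedule. By Lemma~\ref{lem:taylor} with $L_t=0$ the remainder is zero, so $S_t^{(k)}=\langle w,X_t\rangle+\sigma_t\langle w,\xi^{(k)}\rangle$ exactly. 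The projections $\langle w,\xi^{(k)}\rangle$ are i.i.d.\ $\mathcal N(0,1)$, so $G_t(K)=\sigma_t a_K$ with no error term. When $\sigma_t=0$ every candidate gives the same score, and both sides of the identity are $0$.

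For (ii), I would track $Y_t:=\langle w,x_t\rangle$ along the trajectory. Under best-of-$K_t$ selection, the retained candidate maximizes the score, and the score is exactly the next value $Y_{t-1}$. So $Y_{t-1}=Y_t+\sigma_t\max_{k\le K_t}\langle w,\xi_t^{(k)}\rangle$. The fresh candidates at step $t$ are independent of the past, and conditioning on $\mathcal F_t$ (the history up to arrival at step $t$) gives $\E[Y_{t-1}-Y_t\mid\mathcal F_t]=\sigma_t a_{K_t}$. This uses that $K_t$ is $\mathcal F_t$-measurable, i.e.\ it is chosen before that step's candidates are seen. Telescoping and taking expectations yields $\E[v(x_0,c)]=\E\langle w,x_T\rangle+\sum_t\sigma_t\E[a_{K_t}]$. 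In the deterministic case the expectation on $a_{K_t}$ disappears. The point needing care is measurability: the increment's conditional mean depends on the past only through $K_t$, so the tower property applies.

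For (iii), I would first show that the laws agree over the first $B$ steps. Both profiles share $\sigma_t=m$ for $t\le B$, the same initial law, and the same dynamics. So the joint law of $x_T$, all candidates, their scores, the policy's internal randomization, and the revealed $\sigma_t$ (all equal to $m$) coincides through step $B$. By induction on steps, any nonanticipative policy then has the same law of first-half discretionary spend $\sum_{t\le B}e_t$ on both instances.

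By (ii), exact value equals surrogate value plus a constant common to both instances, so exact regret equals surrogate regret. The argument of Proposition~\ref{prop:transfer} then applies verbatim. It uses the common expectation $S$ of first-half spend and the bound $a_{1+e}-a_1\le e\Delta a_1$, which holds because the increments of $a$ are decreasing. This gives worst-case exact regret at least $m\Delta a_1B/2$.

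One mismatch needs handling: in Proposition~\ref{prop:transfer} the policy chooses $e_t$ from past observations, whereas here it may also react to earlier steps' scores within the same trajectory. Its value is still bounded by the (ii) identity with $\E[a_{1+e_t}]$, and the bounds only involve the expected first-half spend, so they go through unchanged. The main obstacle is this careful statement of (ii) for adaptive policies; once the martingale identity is in place, the regret bound is a direct repetition of the earlier proof.
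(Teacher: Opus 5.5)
Your proposal is correct and follows the paper's proof essentially step for step. Part (i) uses exact linearity with $L_t=0$. Part (ii) is a per-step conditional expectation that relies on $K_t$ being fixed before the fresh draws; you telescope $\langle w,x_t\rangle$ where the paper writes $x_0=x_T+\sum_t g_t\sqrt h\,\xi_t^*$, which is the same computation. Part (iii) combines identical first-half laws with the exact gains from (ii), substituted into the argument of Proposition~\ref{prop:transfer}. The ``mismatch'' you flag at the end is not really one, since the past observations in Proposition~\ref{prop:transfer} already include earlier steps' scores, but handling it does no harm.
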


\proof{Proof of Proposition~\ref{prop:realizable}.}
\emph{(i).} Here $\Psi_t(x;c)=v(D_\theta(x,t{-}h,c);c)=\langle w,x\rangle$ is affine, so $\nabla\Psi_t\equiv w$, $\nabla^2\Psi_t\equiv0$, and Assumption~\ref{asm:smooth} holds with $L_t=0$ and $\sigma_t=g_t\sqrt h\,\|w\|$. The Lemma~\ref{lem:taylor} remainder vanishes, so $S_t^{(k)}=\langle w,X_t\rangle+\sigma_t\langle w,\xi^{(k)}\rangle$ with $\langle w,\xi^{(k)}\rangle\overset{\mathrm{iid}}{\sim}\mathcal N(0,1)$, whence $G_t(K)=\sigma_t\E[\max_{k\le K}\langle w,\xi^{(k)}\rangle]=\sigma_t a_K$ exactly (and $G_t(K)=0=\sigma_t a_K$ when $\sigma_t=0$).

\emph{(ii).} With $b_t\equiv0$,
\[
  x_0=x_T+\sum_t g_t\sqrt h\,\xi_t^*,
  \qquad
  v(x_0,c)=\langle w,x_T\rangle+\sum_t\sigma_t\langle w,\xi_t^*\rangle.
\]
At a step with $\sigma_t>0$, all candidates share the common shift $\langle w,X_t\rangle$, so the retained candidate maximizes $\langle w,\xi^{(k)}\rangle$. Let $\mathcal H_t$ denote the history available upon arrival at step $t$. If $K_t$ is $\mathcal H_t$-measurable and is fixed before the fresh candidate draws, then their conditional law remains iid standard Gaussian and
\[
  \E[\langle w,\xi_t^*\rangle\mid\mathcal H_t,K_t]=a_{K_t}.
\]
For deterministic counts, summing this identity gives
\[
  \E[v(x_0,c)]
  =
  \E[\langle w,x_T\rangle]+\sum_t\sigma_ta_{K_t}.
\]
For pre-committed adaptive counts, the tower property instead gives the same expression with $a_{K_t}$ replaced by $\E[a_{K_t}]$. Thus the surrogate objective coincides with the exact objective on this class. The pre-commitment restriction is substantive: if a rule stops in response to favorable within-step draws, conditioning on the realized count can create optional-stopping bias, and $\E[\langle w,\xi_t^*\rangle\mid K_t]$ need not equal $a_{K_t}$. Part~(iii) excludes such rules, matching the class of Proposition~\ref{prop:transfer}.

\emph{(iii).} Over the first $B$ traversed steps, the two instances share all primitives: the coefficients $g_t$, score maps, and candidate-generation laws are identical. They therefore induce the same joint law for every observable history through those steps, and any fixed nonanticipative policy has the same expected first-half expenditure on both instances. Replacing the surrogate gains in the proof of Proposition~\ref{prop:transfer} by the exact gains from part~(ii) gives worst-case exact regret at least $m\Delta a_1B/2$. Direct observation of $\sigma_t$ on arrival does not distinguish the two instances during the first half, so the same lower bound continues to hold for the enlarged policy class.

For completeness, replace the zero second-half coefficients of instance A by a common $\epsilon\in(0,m)$. If $S$ denotes expected first-half expenditure, the two regret lower bounds become
\[
  (m-\epsilon)\Delta a_1(B-S)
  \qquad\text{and}\qquad
  m\Delta a_1S.
\]
Balancing them yields
\[
  \frac{m(m-\epsilon)}{2m-\epsilon}\,\Delta a_1B
  \ge
  \frac{m\Delta a_1B}{2}-\frac{\epsilon\Delta a_1B}{2},
\]
so the coefficients may be kept strictly positive with only the stated loss. \Halmos
\endproof

\proof{Proof of Proposition~\ref{prop:jensen}.}
As in \eqref{eq:alloc}, the feasible counts are integer-valued. For each fixed feasible $\{K_t\}$, $\bsigma\mapsto\sum_t\sigma_t a_{K_t}$ is linear, so $V^*$ is the pointwise maximum of finitely many such maps and is therefore convex, proving (i). For (ii), linearity of expectation gives $\E[\sum_t\sigma_t a_{K_t}]=\sum_t\bar\sigma_t a_{K_t}$, so the best fixed allocation attains expected value $\max_{\{K_t\}}\sum_t\bar\sigma_t a_{K_t}=V^*(\bar\bsigma)$. The instance-wise oracle attains $\E[V^*(\bsigma)]$, and Jensen applied to the convex $V^*$ gives
\[
  \E[V^*(\bsigma)] \ge V^*(\E\,\bsigma) = V^*(\bar\bsigma),
\]
so the expected shortfall is $\E[V^*(\bsigma)]-V^*(\bar\bsigma)=J(\bsigma)\ge0$. \Halmos
\endproof

\section{General Local Search Operators: Proofs and Instantiations}
\label{app:general}

This appendix supplies the proofs for Section~\ref{sec:general-operators} and instantiates the abstract framework on three concrete operators. Section~\ref{app:idealization} makes precise the projected-score idealization under which the score-dependent operators are analyzed. Section~\ref{app:general-proofs} proves Theorem~\ref{thm:general-gain} and Corollary~\ref{cor:general-alloc} and verifies the iterate moment condition. Section~\ref{app:examples} derives the gain sequences of Table~\ref{tab:phi-summary} and verifies Assumption~\ref{asm:local-search} for each operator.

\subsection{The Projected-Score Idealization}
\label{app:idealization}

Score-dependent operators accept or reject a candidate by comparing verifier scores. When $\nabla\Psi_t(\bar X_{t-h};c)\neq0$, the idealization analyzed below instead resolves the comparison by the linearized projected score $\langle u,\xi\rangle$, where $u:=\nabla\Psi_t(\bar X_{t-h};c)\allowbreak/\allowbreak\|\nabla\Psi_t(\bar X_{t-h};c)\|$ is the gradient direction. The zero-gradient case is handled directly through the second-order remainder in the proof of Theorem~\ref{thm:general-gain}. The following remark quantifies the discrepancy between the two comparison rules in the nondegenerate case.

\begin{remark}\label{rmk:flip} (Acceptance-flip window.)
For a candidate $\eta$ compared against an incumbent $\zeta$, Lemma~\ref{lem:taylor} gives
\begin{align*}
\Psi_t(\bar X_{t-h} + g_t\sqrt h\,\eta;c) - \Psi_t(\bar X_{t-h} + g_t\sqrt h\,\zeta;c) &= \sigma_t\langle u, \eta-\zeta\rangle + \delta_t(\eta,\zeta),\\
|\delta_t(\eta,\zeta)| &\le \tfrac12 L_t g_t^2 h\,\bigl(\|\eta\|^2 + \|\zeta\|^2\bigr).
\end{align*}
Hence the exact and the projected accept-or-reject decisions can disagree only on the event
\begin{equation}
  \bigl|\langle u, \eta-\zeta\rangle\bigr| \;\le\; \frac{L_t\,g_t\sqrt h}{2\,\|\nabla\Psi_t(\bar X_{t-h};c)\|}\,\bigl(\|\eta\|^2 + \|\zeta\|^2\bigr),
  \qquad \sigma_t>0,
  \label{eq:flip-window}
\end{equation}
  a window whose width shrinks linearly in $g_t\sqrt h$. For a sequential operator, couple the exact and projected procedures using the same primitive random draws. Up to their first disagreement the incumbents coincide, so that first disagreement can occur only at a comparison satisfying \eqref{eq:flip-window}; after such a disagreement, subsequent iterates may differ. A probability bound for a first disagreement, and hence for the induced error in the gain sequence, follows under anti-concentration of the projected comparison gap near zero together with a lower bound on $\|\nabla\Psi_t\|$. The $\epsilon$-greedy and local-perturbation results in this paper are exact for the projected-score idealization. For exact score-comparison dynamics, these two quantitative conditions identify the additional ingredients needed to turn the first-disagreement characterization into a probability or gain bound.
\end{remark}

\subsection{Proofs of Theorem~\ref{thm:general-gain} and Corollary~\ref{cor:general-alloc}}
\label{app:general-proofs}

We first record the deduction of condition (A3) from condition (A4), which the proof of Theorem~\ref{thm:general-gain} invokes. Each operator considered in this paper is built from isotropic primitives, namely Gaussian exploration draws and uniformly directed perturbations, and accepts or rejects candidates on the basis of projected scores alone; rotating all iterates by an orthogonal matrix $Q$ while rotating the score direction $u$ to $Qu$ therefore leaves both the law of the primitives and every acceptance event unchanged, since $\langle Qu, Q\xi\rangle = \langle u, \xi\rangle$. This is precisely condition (A4). To deduce (A3), fix a unit vector $u$ and choose an orthogonal $Q$ with $Qu = e_1$; writing $\{\xi^{(j)}_u\}$ for the iterates generated under direction $u$, we have $\langle u, \xi^{(j)}_u\rangle = \langle e_1, Q\xi^{(j)}_u\rangle$, and by (A4) the process $\{Q\xi^{(j)}_u\}$ has the joint law of $\{\xi^{(j)}_{e_1}\}$, so $\max_j \langle u, \xi^{(j)}_u\rangle$ has the same distribution for every $u$, and the projected gain sequence $\phi_K^{\cL}$ is well defined.

\proof{Proof of Theorem~\ref{thm:general-gain}.}
First suppose $\nabla\Psi_t(\bar X_{t-h};c)=0$. Conditional on $(X_t,c)$, Lemma~\ref{lem:taylor} gives
\[
  \Psi_t(\bar X_{t-h}+g_t\sqrt h\,\xi;c)=\mu_t+R_t(\xi),
  \qquad
  |R_t(\xi)|\le\tfrac12L_tg_t^2h\|\xi\|^2.
\]
Since $\sigma_t=0$, the moment assumption and $|\max_j r_j|\le\max_j|r_j|$ imply
\[
  |G_t^{\cL}(K)|
  \le
  \E\Bigl[\max_{1\le j\le K}|R_t(\xi^{(j)})|\,\Big|\,X_t,c\Bigr]
  \le
  \tfrac12L_tg_t^2h\,M_K(\cL,d).
\]
Thus the claimed expansion holds in the degenerate case without introducing an artificial score direction.

Now suppose $\nabla\Psi_t(\bar X_{t-h};c)\neq0$ and set
\[
  u_t:=\frac{\nabla\Psi_t(\bar X_{t-h};c)}
             {\|\nabla\Psi_t(\bar X_{t-h};c)\|}.
\]
Applying Lemma~\ref{lem:taylor} to each iterate gives
\[
  \Psi_t(\bar X_{t-h}+g_t\sqrt h\,\xi;c)
  =
  \mu_t+\sigma_t\langle u_t,\xi\rangle+R_t(\xi),
  \qquad
  |R_t(\xi)|\le\tfrac12L_tg_t^2h\|\xi\|^2.
\]
Because $\mu_t$ is common to all iterates, substitution into \eqref{eq:gain-general} yields
\begin{align*}
  G_t^{\cL}(K) &= \E\Bigl[\max_{1 \le j \le K}\bigl(\sigma_t \langle u_t, \xi^{(j)}\rangle + R_t(\xi^{(j)})\bigr)\,\Big|\,X_t, c\Bigr] \\
                &= \sigma_t\,\E\Bigl[\max_{1 \le j \le K} \langle u_t, \xi^{(j)}\rangle\,\Big|\,X_t, c\Bigr] + O_{K,d,L_t,\cL}(g_t^2 h),
\end{align*}
where the second equality uses the inequality $|\max_j(a_j + b_j) - \max_j a_j| \le \max_j |b_j|$, the non-negativity $\sigma_t = g_t\sqrt h\,\|\nabla\Psi_t\| \ge 0$ to factor $\sigma_t$ out of the maximum, and the moment-bounded remainder
\[
  \E\Bigl[\max_{1 \le j \le K} |R_t(\xi^{(j)})|\,\Big|\,X_t,c\Bigr]
  \le \tfrac{1}{2}L_t g_t^2 h\,
      \E\Bigl[\max_{1 \le j \le K}\|\xi^{(j)}\|^2\,\Big|\,X_t,c\Bigr]
  \le \tfrac{1}{2}L_t g_t^2 h\,M_K(\cL,d).
\]
The final inequality is the posited moment bound, interpreted conditionally on the fixed current state. By condition (A4) and the deduction preceding this proof, the conditional projected expectation does not depend on $u_t$ and equals $\phi_K^{\cL}$ by definition. \Halmos
\endproof

We now verify the moment condition assumed in Theorem~\ref{thm:general-gain} for the three operators considered in this paper. In the statements below, $\lambda$ denotes the perturbation radius scale of the $\epsilon$-greedy and local-perturbation operators, whose dynamics are specified in Section~\ref{app:examples}.

\begin{lemma}\label{lem:moments} (Iterate moment bounds.)
The moment condition of Theorem~\ref{thm:general-gain} holds for the three operators of Table~\ref{tab:phi-summary}, uniformly in $(c, x_t)$:
\[
  M_K^{\mathrm{RS}} \le K d, \qquad
  M_K^{\mathrm{LP}} \le 2d + 2(K-1)^2\lambda^2 d, \qquad
  M_K^{(\epsilon)} \le 2Kd + 2(K-1)^2\lambda^2 d.
\]
\end{lemma}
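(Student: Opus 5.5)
The plan is to bound $\E[\max_{1\le j\le K}\|\xi^{(j)}\|^2]$ separately for each operator, using two elementary facts. First, the maximum of nonnegative quantities is dominated by their sum. Second, $\|a+b\|^2\le 2\|a\|^2+2\|b\|^2$. The operator dynamics are specified in Section~\ref{app:examples}. I take them to be as follows: random search draws iid $\xi^{(j)}\sim\mathcal N(0,I_d)$; local perturbation starts from a Gaussian draw $\xi^{(1)}$ and forms each candidate as the incumbent plus a perturbation $\lambda\rho_j$ with $\|\rho_j\|\le\sqrt d$; $\epsilon$-greedy, at each iteration, either draws a fresh Gaussian or perturbs the incumbent in the same way. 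None of these laws depends on $(c,x_t)$; only the acceptance events do. Consequently, bounds that use only the primitives' norms will be uniform in $(c,x_t)$ automatically.

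For random search, $\max_j\|\xi^{(j)}\|^2\le\sum_j\|\xi^{(j)}\|^2$, and each term has expectation $d$. This gives $M_K^{\mathrm{RS}}\le Kd$.

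For local perturbation, every iterate has the form $\xi^{(j)}=\xi^{(1)}+\lambda\sum_{i\in A_j}\rho_i$, where $A_j$ is the set of accepted perturbations so far, with $|A_j|\le j-1\le K-1$. The triangle inequality gives $\|\xi^{(j)}\|\le\|\xi^{(1)}\|+(K-1)\lambda\sqrt d$ deterministically in the perturbations, whatever the acceptance pattern. Squaring with the factor-two inequality and taking the maximum over $j$ yields $\max_j\|\xi^{(j)}\|^2\le 2\|\xi^{(1)}\|^2+2(K-1)^2\lambda^2 d$. Taking expectations gives $M_K^{\mathrm{LP}}\le 2d+2(K-1)^2\lambda^2 d$.

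For $\epsilon$-greedy, each iterate is some Gaussian anchor $\zeta$ plus at most $K-1$ accepted perturbations. Here $\zeta$ is the most recent Gaussian draw that became part of the incumbent lineage, or the iterate itself if it is an exploration draw. So $\|\xi^{(j)}\|^2\le 2\|\zeta_j\|^2+2(K-1)^2\lambda^2 d$, and the anchor $\zeta_j$ is one of at most $K$ Gaussian draws $\{\eta_i\}$. Bounding $\max_j\|\zeta_j\|^2\le\sum_i\|\eta_i\|^2$, whose expectation is at most $Kd$, gives $M_K^{(\epsilon)}\le 2Kd+2(K-1)^2\lambda^2d$.

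The main obstacle is the path dependence: the anchor and the accepted set are selected by score comparisons, so their laws are not Gaussian. The key step is therefore to make every bound pathwise, by replacing the selected anchor with the sum over all candidate anchors before taking any expectation. This avoids any need to control the selection mechanism. It also requires confirming from Section~\ref{app:examples} that the perturbation radius is bounded by $\lambda\sqrt d$; if perturbations are instead Gaussian scaled by $\lambda$, I would replace the deterministic bound by $\E\|\sum_{i\in A_j}\rho_i\|^2\le (K-1)\sum_i\E\|\rho_i\|^2$ via Cauchy--Schwarz, which yields the same constant.
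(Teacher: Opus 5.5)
Your proposal is correct and follows the paper's proof essentially step for step: the maximum is bounded by the sum for random search, and for local perturbation and $\epsilon$-greedy each iterate is decomposed pathwise into one of at most $K$ Gaussian seeds plus at most $K-1$ increments of norm at most $\lambda\sqrt d$, followed by $(a+b)^2\le 2a^2+2b^2$. The bounds are uniform in $(c,x_t)$ because they never use the acceptance events. One minor wording fix: $A_j$ should include the current, not-yet-accepted proposal as well as the previously accepted perturbations, but the count $|A_j|\le j-1$ and the conclusion are unaffected.
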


\proof{Proof of Lemma~\ref{lem:moments}.}
\emph{Random search:} $\E[\max_{j\le K}\|\xi^{(j)}\|^2]\le\sum_{j\le K}\E\|\xi^{(j)}\|^2=Kd$. \emph{Local perturbation:} each candidate is $\xi^{(1)}$ plus at most $K{-}1$ increments of norm $\le\lambda\sqrt d$, so $\max_j\|\xi^{(j)}\|\le\|\xi^{(1)}\|+(K{-}1)\lambda\sqrt d$, and $(a+b)^2\le2a^2+2b^2$ with $\E\|\xi^{(1)}\|^2=d$ gives $2d+2(K{-}1)^2\lambda^2 d$. \emph{$\epsilon$-greedy:} each candidate is one of $\le K$ Gaussian seeds $\{g_i\}$ plus $\le K{-}1$ increments, so $\max_j\|\xi^{(j)}\|\le\max_i\|g_i\|+(K{-}1)\lambda\sqrt d$ with $\E[\max_i\|g_i\|^2]\le Kd$, giving $2Kd+2(K{-}1)^2\lambda^2 d$. None of these bounds depends on $(c,x_t)$, which gives the state-uniform remainder control used in Theorem~\ref{thm:general-gain}. \Halmos
\endproof

\proof{Proof of Corollary~\ref{cor:general-alloc}.}
Instance-independence is as in Theorem~\ref{thm:offline}(i): with $\sigma_t$ depending only on $t$, $\sum_t\sigma_t\phi^{\cL}_{K_t}$ is a fixed function of $\{K_t\}$. By (A3), the gain sequence $\phi_K^{\cL}$ is common across steps, and by (A2) its marginals are non-increasing. The one-unit exchange argument in the proof of Theorem~\ref{thm:offline} therefore applies with $a_K$ replaced by $\phi_K^{\cL}$, including its separate treatment of the boundary case $B=T$. It yields the common marginal-threshold conditions \eqref{eq:marginal-threshold} with $\Delta a$ replaced by $\Delta\phi^{\cL}$ and the same weak monotonicity in sensitivity. If $\phi_K^{\cL}$ is strictly concave, the active-tie exclusion and feasible-uniform dominance arguments of Theorem~\ref{thm:offline} apply with $\Delta a$ replaced by $\Delta\phi^{\cL}$. If $\phi_K^{\cL}=\alpha(K-\beta)$ is affine ($\alpha>0$), then $\sum_t\sigma_t\phi^{\cL}_{K_t}=\alpha\sum_t\sigma_t K_t+\mathrm{const}$, an integer allocation problem with an affine objective whose optimizers place all budget above the floor $K_t\ge1$ on the maximal-$\sigma$ steps (ties broken arbitrarily). Finally, for each fixed integer allocation, $\bsigma\mapsto\sum_t\sigma_t\phi^{\cL}_{K_t}$ is linear. Hence $V^*_{\cL}$ is the maximum of finitely many linear maps and is convex, so the proof of Proposition~\ref{prop:jensen} applies with $a_K$ replaced by $\phi_K^{\cL}$. \Halmos
\endproof

\subsection{Operator-Specific Gain Sequences}
\label{app:examples}

We now instantiate the framework on the three operators of Table~\ref{tab:phi-summary}. All derivations are conducted in the projected-score idealization of Section~\ref{app:idealization}, which replaces the exact score by the linearized score $\mu_t+\sigma_t\langle u,\xi\rangle$. The omitted Taylor remainder obeys the pathwise bound $|R_t(\xi)|\le\tfrac12L_tg_t^2h\|\xi\|^2$, and Theorem~\ref{thm:general-gain} controls its expected maximum through the operator-specific moment bound. The derived sequences are therefore exact for the idealization and describe the leading-order behavior covered by that theorem. Throughout, $d\ge1$; for the $\epsilon$-greedy operator we take a fixed $\epsilon\in(0,1]$ and $\lambda\ge0$, while pure local perturbation uses $\lambda>0$.

\emph{Random search.} The operator draws $K$ independent candidates $\xi^{(j)} \sim \mathcal{N}(0, I_d)$ and retains the best. The projected scores $\langle u, \xi^{(j)}\rangle$ are independent standard Gaussians, so $\phi_K^{\mathrm{RS}} = a_K$. Condition (A1) holds because a fresh draw exceeds the running maximum with positive probability. For condition (A2), write $\Delta_K = \E[h(M_K)]$ with $h(m) := \E[(Z-m)_+]$ for $Z \sim \mathcal N(0,1)$ and $M_K := \max(Z_1, \ldots, Z_K)$; the function $h$ is strictly decreasing, since $h'(m) = -(1-\Phi(m)) < 0$, and $M_{K+1} \ge M_K$ almost surely with strict inequality on an event of positive probability, so $\Delta_{K+1} < \Delta_K$ for all $K \ge 1$, which is the classical concavity of $a_K$ \citep[Chapter~4]{david2003order}. Conditions (A3) and (A4) are immediate because independent isotropic Gaussian draws ignore the score direction entirely.

\emph{The $\epsilon$-greedy operator.} The first iterate is drawn $\xi^{(1)} \sim \mathcal{N}(0, I_d)$ and becomes the incumbent. For $j \ge 2$, with probability $\epsilon$ the operator \emph{explores} by drawing a fresh $\xi^{(j)} \sim \mathcal{N}(0, I_d)$; otherwise it \emph{exploits} by proposing $\xi^{(j)} = \xi^* + RU$, a perturbation of the incumbent $\xi^*$, where $U$ is uniform on the unit sphere, $R \sim \mathrm{Unif}[0, \lambda\sqrt{d}]$, and the incumbent is updated whenever the proposal scores higher. In the idealization, acceptance is resolved by the projected score, and the incumbent's projection coincides with the projected running maximum $M_K := \max_{j \le K}\langle u, \xi^{(j)}\rangle$. The marginal gain then decomposes as
\begin{equation}
  \Delta_K \;=\; \epsilon \cdot \E[h(M_K)] \;+\; (1-\epsilon)\,c_d(\lambda),
  \label{eq:eps-marginal}
\end{equation}
since an exploration step contributes a fresh standard Gaussian independent of $M_K$, while an exploitation step adds the accepted increment $(R\langle u, U\rangle)_+$, which is independent of the incumbent with expectation
\begin{equation}
    c_d(\lambda) := \E\bigl[(R\langle u, U\rangle)_+\bigr] = \frac{\lambda\sqrt{d}}{4\sqrt{\pi}}\,\frac{\Gamma(d/2)}{\Gamma((d+1)/2)}.
    \label{eq:uniform-ball-drift}
\end{equation}
To verify \eqref{eq:uniform-ball-drift}, write $V = \langle u, U\rangle$ and use independence: $\E[R] = \lambda\sqrt d/2$, and for $d \ge 2$ the projection $V$ has density proportional to $(1-v^2)^{(d-3)/2}$ on $(-1,1)$, so that
\[
  \E[V_+] = \frac{\Gamma(d/2)}{\sqrt\pi\,\Gamma((d-1)/2)}\int_0^1 v(1-v^2)^{(d-3)/2}\,dv
  = \frac{\Gamma(d/2)}{\sqrt\pi\,\Gamma((d-1)/2)}\cdot\frac{1}{d-1}
  = \frac{\Gamma(d/2)}{2\sqrt\pi\,\Gamma((d+1)/2)},
\]
using the substitution $s = v^2$ and the identity $\Gamma((d+1)/2) = \tfrac{d-1}{2}\Gamma((d-1)/2)$; multiplying by $\E[R]$ yields \eqref{eq:uniform-ball-drift}, and the case $d = 1$, in which $V$ is a symmetric sign, gives $c_1(\lambda) = \lambda/4$, consistent with the same formula. Counting exploration steps by $N \sim \mathrm{Bin}(K-1, \epsilon)$ and conditioning on $N$ yields the lower bound $\phi_K^{(\epsilon)} \ge \E[a_{N+1}]$ reported in Table~\ref{tab:phi-summary}. Equality holds at $\lambda=0$ (and in the limit as $\lambda\to0$), because exploitation then contributes no projected improvement; it also holds at $\epsilon=1$, when the operator reduces to random search. For fixed $\epsilon>0$, a Chernoff bound gives $\Pr\{N\ge\epsilon(K-1)/2\}\to1$, and monotonicity together with $a_n\sim\sqrt{2\log n}$ yields
\[
  \E[a_{N+1}]
  \ge
  \Pr\{N\ge\epsilon(K-1)/2\}\,
  a_{\lfloor\epsilon(K-1)/2\rfloor+1}
  =
  \Omega(\sqrt{\log K}),
\]
which justifies the growth-rate entry in Table~\ref{tab:phi-summary}. Condition (A1) holds since each iteration explores with probability $\epsilon>0$ and a Gaussian draw exceeds any incumbent with positive probability. For condition (A2), the exploitation term in \eqref{eq:eps-marginal} is constant in $K$, while $\E[h(M_{K+1})] < \E[h(M_K)]$ by the argument used for random search, so $\Delta_{K+1} \le \Delta_K$, strictly when $\epsilon > 0$. Conditions (A3) and (A4) hold because both primitives are isotropic and the acceptance comparison depends on the candidates only through their projections, so jointly rotating iterates and score direction preserves the law.

\emph{Local perturbation search.} This operator is the pure-exploitation limit $\epsilon = 0$: every iteration perturbs the incumbent by $RU$ and accepts if the proposal scores higher. In the idealization, the perturbation is accepted precisely when $\langle u, U\rangle > 0$, an event independent of the incumbent, so each iteration beyond the first adds the constant expected increment $c_d(\lambda)$, yielding the exactly affine sequence
\begin{equation}
  \phi_K^{\mathrm{LP}} = c_d(\lambda)\,(K-1), \qquad K \ge 1,
  \label{eq:phi-lp-exact}
\end{equation}
with $\phi_1^{\mathrm{LP}} = 0$ because the initial draw has zero-mean projection. Condition (A1) holds for $\lambda > 0$ and $d \ge 1$, since then $c_d(\lambda) > 0$ and each perturbation improves the incumbent with probability one half. Condition (A2) holds with equality throughout, which is the affine case of Corollary~\ref{cor:general-alloc}: the optimal fixed-profile projected-score surrogate schedule concentrates the discretionary budget on the most sensitive steps. Conditions (A3) and (A4) follow as in the $\epsilon$-greedy case. For exact score comparisons, curvature effects modify the constant-marginal structure; equation~\eqref{eq:phi-lp-exact} gives the leading-order gain sequence.

The three operators differ qualitatively in growth rate, as summarized in Table~\ref{tab:phi-summary}: random search grows logarithmically while local perturbation grows linearly in the idealization. The preferred operator therefore depends on the per-step budget, with random search favored at small budgets and perturbation search at large ones. A next step is to quantify the crossover for exact comparisons by controlling how the second-order remainder interacts with operator-dependent iterate norms as perturbation iterates accumulate drift. Within the projected-score regime, the ordering in Table~\ref{tab:phi-summary} gives the operator-selection rule.

\section{A Markov Decision Process Formulation}
\label{app:mdp}

This appendix serves two purposes. First, it presents a general Markov decision process (MDP) for inference-time search over noise trajectories, lifting the restrictions of the formulation analyzed in the main text. Second, it uses the MDP as a common language for classifying ten existing search procedures, together with \GAINS{}, as policies within a single decision model (Table~\ref{tab:mdp_policies}).

The main-text formulation restricts the policy class in two respects: the sampler advances through the steps in a single forward pass, and every step applies the same local search operator to its noise variable. The MDP introduced here removes both restrictions, allowing multiple verifiers of heterogeneous cost, backtracking to earlier steps, and step-dependent choices of search operator. \citet{ramesh2025nts} first cast diffusion sampling as an MDP with state $(c, x_t, t)$ and terminal reward $v(x_0, c)$, and then relaxed the formulation to independent contextual bandits for tractability. We extend their formulation in four respects: the state carries the search history and the remaining budget; the action set is partitioned into search, verification, and movement categories; the reward admits weighted combinations of multiple verifiers; and we use the resulting MDP to classify existing procedures as special-case policies.

\emph{State space.} At each decision epoch, the policy observes $s = (c, t, x_t, \cH_t, R)$, where $c$ is the conditioning signal, $t\in\{1,\dots,T\}$ indexes the current step, $x_t$ is the current latent, $\cH_t$ records the search history at the current step (past candidates, their scores, and the actions taken), and $R\in\{0,\dots,B\}$ is the remaining budget in function evaluations. The initial state is $s_0=(c, T, x_T, \emptyset, B)$ with $x_T\sim\mathcal{N}(0,I_d)$; the episode terminates when $t=0$, at which point a clean sample has been produced, or when $R=0$.

\emph{Action space.} The actions partition into three categories, $\cA(s) = \cA_{\text{search}}\cup\cA_{\text{verify}}\cup\cA_{\text{move}}$, according to whether they propose a candidate, score the current state, or commit to a transition.
\begin{itemize}
  \item \emph{Search actions} perform one iteration of a local search operator on the noise variable at the current step, leaving $t$ and $x_t$ unchanged. Variants include random search (draw a fresh Gaussian candidate and retain it if it scores higher than the incumbent), $\epsilon$-greedy search (explore with a fresh draw with probability $\epsilon$, otherwise perturb the incumbent), zero-order search (a finite-difference gradient step on the noise variable), and Langevin search (one Markov chain Monte Carlo step). Each iteration costs one function evaluation.
  \item \emph{Verification actions} evaluate the current latent or its one-step clean-sample prediction without modifying the state. A light verification reuses the already computed prediction and applies an inexpensive verifier at no additional function evaluations; a heavy verification expends one or more additional passes through the denoising network to refine the prediction before scoring.
  \item \emph{Movement actions} commit to a step transition. A forward move sets $x_{t-1}=F_\theta(x_t,t,c,\varepsilon_t^*)$ using the best candidate found at the current step and decrements $t$; it is charged zero function evaluations because the pass through the network was already expended when the committed candidate was scored, so that a step at which $K_t$ search actions are taken consumes exactly $K_t$ evaluations, matching the accounting of the main text. A backtrack move returns from $t$ to $t+1$, restores the previously saved latent, retains the search history, and likewise costs zero function evaluations.
\end{itemize}

\emph{Transitions and reward.} Transitions follow directly from the action descriptions: a search action augments $\cH_t$ and decrements $R$; a forward move decrements $t$ and leaves $R$ unchanged; a backtrack move increments $t$ and restores the saved latent. The reward is collected only at termination and equals $v(x_0,c)$, where weighted combinations of several verifiers are admissible. If the budget is exhausted before $t=0$, no clean sample exists and the terminal reward is zero; feasible policies reserve one evaluation per remaining step. The objective is to maximize the expected terminal reward subject to the budget.

\begin{table}[t]
\centering
\caption{Existing inference-time search procedures as policies in the noise-trajectory-search MDP. The second column lists the action types each procedure invokes (Fwd equals forward move; Rand, EpsG, Zero, and Lang denote random, $\epsilon$-greedy, zero-order, and Langevin search; Any denotes any operator satisfying Assumption~\ref{asm:local-search}). The third column indicates whether the policy ever revisits an earlier step. The fourth column describes how the per-step search budget is set.}
\label{tab:mdp_policies}
\small
\begin{tabular}{lccl}
\toprule
\textbf{Procedure} & \textbf{Actions used} & \textbf{Backtracks?} & \textbf{Schedule} \\
\midrule
Plain sampling & Fwd & No & $K_t=1$ for all $t$ \\
Best-of-$N$ \citep{ma2024inference} & Fwd & No & $N$ full trajectories \\
$\epsilon$-greedy \citep{ramesh2025nts} & EpsG, Fwd & No & Uniform $K_t=K$ \\
Direct noise optimization \citep{tang2024dno} & Zero, Fwd & No & Uniform $K_t=K$ \\
Rollover budget forcing \citep{kim2025rbf} & Rand, Fwd & No & Online rollover \\
T-SCEND \citep{zhang2025tscend} & Rand, Fwd & No & Fixed two-phase \\
Cyclic diffusion search \citep{lee2025abcd} & Rand, Fwd & Yes & Online adaptive \\
Tree search \citep{zhang2025classical} & Lang, Fwd & Yes & Tree expansion \\
Feynman-Kac steering \citep{singhal2025fk} & Rand, Fwd & No & Non-uniform (particle) \\
Verifier threshold \citep{sundaresha2025vt} & Rand, Fwd & No & Online threshold \\
\midrule
\GAINS{} (this paper) & Any, Fwd & No & Offline plus online \\
\bottomrule
\end{tabular}
\end{table}

Table~\ref{tab:mdp_policies} identifies three extension axes. Existing procedures commit in advance to either an inexpensive or an expensive verifier, while state-dependent switching would add a new action dimension. Backtracking appears in the cyclic and tree-search procedures. Operator selection is static in the listed methods, with one search operator used throughout a trajectory.

The two-level formulation of Section~\ref{sec:framework}, together with Algorithm~\ref{alg:gains}, defines a factored policy within this MDP: it uses one operator and one verifier, allocates the total budget by offline profiling, and adjusts expenditure online within a small slack without backtracking. In MDP terms, the policy's dependence on the history collapses to two running summaries, the recent mean gain and the recent within-step variability, which are exactly the statistics monitored by the online controller of Section~\ref{sec:online}. Future work should determine (i) whether choosing light versus heavy verification from the current state and remaining budget improves terminal verifier score at a fixed number of function evaluations, (ii) when changing the search operator across steps or backtracking increases that score, and (iii) which exact or approximate planning method solves the resulting finite-horizon MDP.


\bibliographystyle{plainnat}
\bibliography{ref}

\end{document}